\DeclarePairedDelimiter\floor{\lfloor}{\rfloor}
\newcommand{\Excess}{\textsc{\texttt{Excess}}}
\newcommand{\norm}[1]{\left\lVert#1\right\rVert}
\newtheorem{mydef}{Definition}
\newtheorem{theorem}{Theorem}
\newtheorem{lemma}{Lemma}
\newtheorem{corollary}{Corollary}
\newtheorem{prob}{Problem}
\newcommand{\eg}{\textit{e.g.}}
\newcommand{\ie}{\textit{i.e.}}
\newcommand{\etal}{\textit{et al}.}
\title{Online Exploration of an Unknown Region of Interest with a Team of Aerial Robots}
\date{}
\author{
  Yoonchang Sung, Deeksha Dixit, and Pratap Tokekar\footnote{Y. Sung is with the Computer Science Department, The University of Texas at Austin, Austin, TX, 78712 USA. D. Dixit and P. Tokekar are with the Department of Computer Science, University of Maryland, College Park, MD 20742, USA.}
}
\begin{document}



\maketitle

\begin{abstract}
In this paper, we study the problem of exploring an unknown Region Of Interest (ROI) with a team of aerial robots. The size and shape of the ROI are unknown to the robots. The objective is to find a tour for each robot such that each point in the ROI must be visible from the field of view of some robot along its tour.

We propose a recursive depth-first search-based algorithm that yields a constant competitive ratio for the exploration problem. Our analysis also extends to the case where the ROI is translating, \eg, in the case of marine plumes under constant wind conditions. In the simpler version of the problem where the ROI is modeled as a 2D grid, the competitive ratio is \sloppy $\frac{2(S_r+S_p)(R+\floor{\log{R}})}{(S_r-S_p)(1+\floor{\log{R}})}$ where $R$ is the number of robots, and $S_r$ and $S_p$ are the robot speed and the ROI speed, respectively. We also consider a more realistic scenario where the ROI shape is not restricted to grid cells but an arbitrary shape. We show our algorithm has $\frac{2(S_r+S_p)(18R+\floor{\log{R}})}{(S_r-S_p)(1+\floor{\log{R}})}$ competitive ratio under some conditions. We empirically verify our algorithm using simulations as well as a proof-of-concept experiment mapping a 2D ROI using an aerial robot with a downward-facing camera.

\

\textbf{Keywords:} {multi-robot systems, online exploration, aerial robots, competitive algorithm}
\end{abstract}

%

\section{Introduction}
We investigate the problem of exploring and mapping an unknown 2D Region Of Interest (ROI) using a team of aerial robots. Our overall vision is to develop coverage algorithms for enabling a team of robots to assist emergency responders in disaster scenarios or environmental scientists in data collection. Figure~\ref{fig:overview} shows one motivating scenario where autonomous Unmanned Aerial Vehicles (UAVs) can be used to map the region in a lake that is contaminated by a leaked pollutant (\eg, chemical spill in a lake). Instead of learning spatiotemporal phenomena or finding a hotspot in the contaminated region, we are interested in mapping the entire contaminated region, which could possibly move by the wind. The size and shape of this region are usually not known until observed by the UAVs. 
If the UAVs are flying at lower altitudes or if the contaminated region is large, the UAV will need to plan its motion to explore and map out the unknown ROI using its onboard sensors (\eg, downwards-facing camera in the case of visible contaminants such as chemical spills). 

Another example is assessing the damage to structures after a natural disaster~\cite{fernandez2015uav} using downwards-facing cameras mounted on UAVs. Such assessment is required to estimate the cost of recovery and reconstruction. As in the previous example, the exact shape of the region that has undergone damage may not be known a priori and must be mapped using the UAVs. There are many such examples of mapping of ROIs of unknown shape and size using robots equipped with appropriate sensors~\cite{stachniss2003exploring,masehian2017cooperative,song2020care}.

\begin{figure}[thpb]
\centering
\includegraphics[width=0.55\columnwidth]{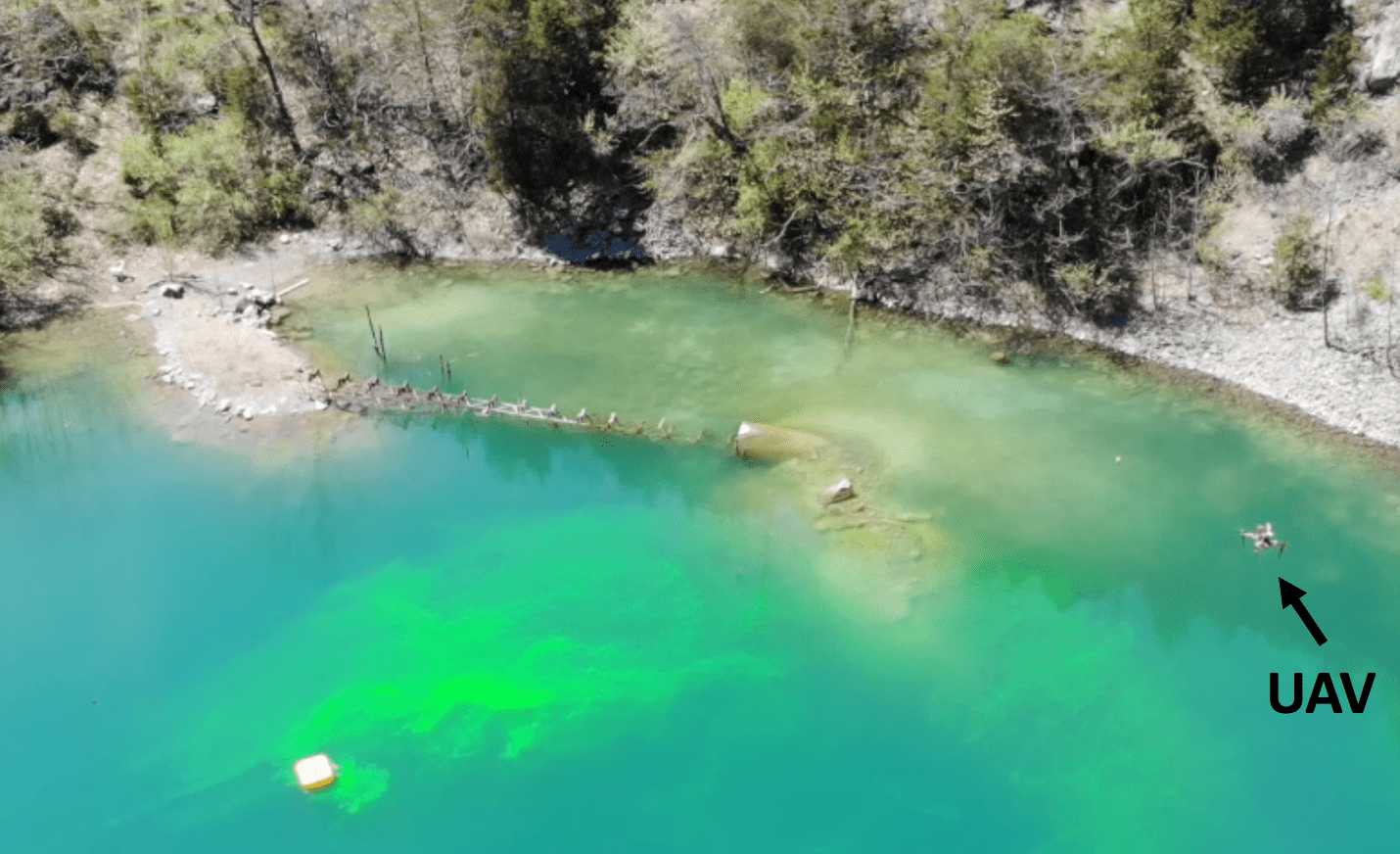}
\caption{A UAV conducting plume exploration in an abandoned quarry near Blacksburg, Virginia.}
\label{fig:overview}
\end{figure}

In such scenarios, the larger environment which contains the ROI is known but the exact shape of the ROI is unknown. 
We assume that the robot has a sensing pipeline that is capable of taking in the image and distinguishing whether the robot is above an ROI or not. The goal is to plan a trajectory for a team of aerial robots to collectively map the ROI in the least amount of time.
The problem of exploring an unknown 2D environment is a well-studied one in the robotics~\cite{brass2011multirobot,julia2012comparison,nuske2015autonomous,girdhar2014autonomous} and computational geometry~\cite{icking2005exploring,kolenderska2009improved,fraigniaud2006collective,higashikawa2014online} communities. However, the problem considered in this paper differs from these works in the following ways.


While the focus is on monitoring static scenes, our work extends to translating ROI, operating under constant wind conditions. In that case, we show that the performance of the algorithm is, not surprisingly, a function of the relative speeds of the robots and the ROI. Depending on the application, there may exist multiple ROIs in the environment. Furthermore, the robots may start in a region that is not part of the ROI and may have to find the ROI in the first place. In such cases, we can use a boustrophedon search pattern to find the ROI~\cite{tokekar2013tracking}. In this paper, we focus only on mapping a single connected ROI. The single ROI algorithm can be extended to multiple static ROIs.

We use the notion of \emph{competitive ratio}~\cite{borodin2005online} to analyze the performance of our algorithm. The competitive ratio for an online algorithm is defined as the largest (\ie, worst-case input) ratio of the time taken by the online algorithm to the time taken by an optimal offline algorithm. The offline algorithm is one which knows the shape of the 2D ROI a priori. We seek algorithms that have a low (preferably, constant) competitive ratio. Our main result is a constant competitive ratio for exploring a translating ROI for a fixed number of robots. The constant depends on the relative speeds of the ROI and the robots.

We require the robots to ensure that all points of the ROI are eventually covered by the sensor footprints of at least one of the robots along their paths.
The objective is to minimize the time required for all the robots to explore the ROI and return back to the starting position. Our algorithm builds on the one presented by Higashikawa~\etal~\cite{higashikawa2014online} for exploring an unknown binary tree. We show how to reduce the problem of exploring the ROI to that of exploring a binary tree. 
We first start with the simpler scenario where the ROI is modeled as a 2D grid and then extend it to translating ROI. We further generalize it to the case where the ROI boundary is any smooth (formally defined in Section~\ref{sec:prob}) 2D curve with a finite sensor footprint binary sensor. For both cases, we show that our algorithm yields a constant-competitive ratio.

We validate our algorithm through simulations that quantify the performance as a function of the size of the ROI, the number of robots, and the relative speeds of the ROI and the robots. We also conduct a proof-of-concept field experiment using a UAV with a downwards-facing camera to explore and map a stationary region of interest (runway). We discuss how to implement the algorithm in a practical setting and discuss challenges associated with noisy measurements.

In summary, the contributions of the paper are as follows:
\begin{itemize}
\item We propose a new exploration algorithm for a team of aerial robots that can completely map a region of interest of unknown size and shape. We consider a scenario where the environment is discretized into a grid of cells and the robots are equipped with a camera sensor 
\item We present a constant competitive ratio algorithm for this problem. Our analysis allows for arbitrarily shaped ROIs as well as possibly translating ones.
\item In addition to theoretical results, we evaluate our algorithm through simulations and proof-of-concept demonstrations.
\end{itemize}

A preliminary version of this paper was presented in Sung and Tokekar~\cite{sung2019competitive}. This version improves upon Sung and Tokekar~\cite{sung2019competitive} with a more expansive literature survey, a more detailed explanation on the proposed algorithm, and new simulation results and proof-of-concept experiments, including a description of how to implement the proposed algorithm using a robot with a downward-facing camera.

The rest of the paper is organized as follows. We begin by introducing the related work in Section~\ref{sec:related}. We describe the problem setup in Section~\ref{sec:prob}. Our proposed algorithm for a grid-based map is presented in Section~\ref{sec:grid}. We then extend this to arbitrarily shaped ROIs in Section~\ref{sec:arbitrary}. We present results from representative simulations in Section~\ref{sec:sim} and field experiments in Section~\ref{sec:experiment}, respectively, before concluding with a discussion of future work in Section~\ref{sec:conc}.

\section{Related Work}\label{sec:related}

Environmental monitoring has extensively been studied in robotics due to its practical applications. Some of highlighted tasks include precision agriculture~\cite{tokekar2016sensor,das2015devices}, wildlife habitat monitoring~\cite{tokekar2010robotic,tokekar2013tracking,plonski2017environment} and atmospheric plume tracking~\cite{ishida2012chemical,lochmatter2013plume,fahad2017robotic}. For survey results, see Dunbabin and Marques~\cite{dunbabin2012robots}. The area coverage and exploration are crucial for environmental monitoring as a given environment must be explored by robots in order to detect a target of interest. Galceran and Carreras~\cite{galceran2013survey} listed coverage path planning algorithms that can be used for different sensing and motion models. In case of ROI exploration, the aim is to explore and map an ROI by robots with limited sensing capability.

The objective of online exploration~\cite{julia2012comparison,nuske2015autonomous,girdhar2014autonomous} is to explore and map a region without having prior knowledge on the size and shape of the region.
We begin with several  problems in online exploration of the environment that do not necessarily focus on the limited Field Of View
(FOV) of the sensor, which is the main focus of this paper.
The goal of informative exploration is to maximize the information gathered along the planned trajectories.
Arora and Scherer~\cite{arora2017randomized} designed a near-optimal algorithm for single robot informative path planning where budget constraints exist.
Popovi\'c~\etal~\cite{popovic2020informative} proposed a 3D informative path planning algorithm for a UAV to monitor a terrain.
Corah and Michael~\cite{corah2019distributed} developed a near-optimal distributed algorithm for multi-robot exploration, which approximates the well-known sequential greedy assignment~\cite{singh2009efficient}.
The tracking problem has also been studied in regard to online exploration, such as
algorithms for tracking radio-tagged invasive fish using USVs and ground robots proposed by Plonski~\etal~\cite{plonski2017environment}. They proved competitive ratios for navigating an environment containing an unknown obstacle and energy-efficient solar exploration. 
Localization in online exploration settings is another related topic.
Hitz~\etal~\cite{hitz2014fully} focused on localizing interesting areas in an unknown environment using level set estimation to monitor hazardous cyanobacteria blooms in lakes. The objective in these works was not to completely map an unknown environment (which is the case in this paper) but to  maximize information gain, track and localize targets of interest.

In the online exploration literature, there are several path planning algorithms that have been proposed.
Sim and Little~\cite{sim2009autonomous} proposed a vision-based exploration and mapping solution for a single robot. 
Cesare~\etal~\cite{cesare2015multi} developed a multi-robot exploration algorithm for heterogeneous robots with limited communication and battery-life constraints. However, these works do not guarantee complete coverage.

Some online exploration algorithms guarantee completeness but do not exhibit competitiveness.
Bender~\etal~\cite{bender2002power} and Das~\etal~\cite{das2007map} addressed the problem of dealing with unlabelled (\ie, anonymous) vertices when exploring an unknown graph.
The former defined a pebble that can identify a vertex and found the number of pebbles required to map an unknown environment.
While the former considered the case of a single robot, the latter proposed a distributed version, allowing multiple robots to start from different vertices, and proved upper bounds on the time complexity of their algorithm. Their algorithms, however, do not yield a competitive ratio used as a performance measure in this paper.

When an ROI region 
can be represented by a grid polygon, there exists literature which explores a polygonal region not only completely but also competitively with respect to the optimal trajectory. This can be categorized into \emph{lawn mowing} and \emph{milling} where the former allows a robot to move outside the boundary of a polygon whereas the latter does not. Icking~\etal~\cite{icking2000exploring} proposed a strategy of generating a competitive tour 
for online milling which may contain holes. Icking~\etal~\cite{icking2005exploring} showed $\frac{4}{3}$--competitive algorithm for online milling without considering holes. The algorithms presented by Arkin~\etal~\cite{arkin2000approximation} have $(3+\epsilon)$--approximation for offline lawn mowing and $2.5$--approximation for offline milling. Kolenderska~\etal~\cite{kolenderska2009improved} developed an online milling algorithm of a grid polygon without holes that has a competitive ratio of $\frac{5}{4}$. However, aforementioned works did not take into account a multi-agent perspective. Although Arya~\etal~\cite{arya2001approximation} presented an approximation algorithm for milling where multiple robots can be deployed, their algorithm solves an offline problem. In this work, we pose an online milling version for multiple robots, taking into account their limited sensor FOV.


Previous works in computational geometry assumed specific properties of the region under exploration to ease the analysis. We restrict the ROI to satisfy a specific notion of fatness (defined in the next section).
Van der Stappen and Overmars~\cite{van1994motion} used the notion of $k$--fatness in motion planning with obstacles --- the smaller the value of $k$, the fatter the obstacle.
Efrat~\cite{efrat2005complexity} defined a $(\alpha,\beta)$--covered object if each angle of a triangle fully inside the object is at least $\alpha$ and each edge of this triangle is at least $\beta$ multiplied by the diameter of the object are satisfied.
Aloupis~\etal~\cite{aloupis2014triangulating} adopted the same notation of the fatness for the application of triangulating and guarding polygons.
Lee~\etal~\cite{lee2016structured} used a similar fatness for a triangulation of a planar region for multi-robot coverage.
These works exploited the fatness to prove the space complexity of their algorithms.
In this work, we also define the fatness for proving the competitive ratio for arbitrary ROI shape.

There are competitive algorithms designed for the single-robot case.
Gabriely and Rimon~\cite{gabriely2001spanning} proposed a spanning tree-based coverage algorithm and studied running time and space requirement with respect to the number of cells in the environment.
Klein~\etal~\cite{klein2015local} considered the problem of covering an unknown contamination that expands over time. They proved the upper bound on the number of steps required for the robot to completely cover the contamination.
Sharma~\etal~\cite{sharma2019optimal} proposed a constant-factor approximation algorithm for a square-shaped robot to explore an unknown polygonal environment.

When multiple robots are considered, most of works~\cite{fraigniaud2006collective,brass2011multirobot,higashikawa2014online,mahadev2017mapping,dynia2007robots,preshant2016geometric} have studied a tree-based exploration by employing a recursive Depth-First Search (DFS). 
In these works, the environment to be explored was assumed to be a tree.
Fraigniaud~\etal~\cite{fraigniaud2006collective} proposed a tree exploration algorithm using $R$ robots that is $\mathcal{O}(\frac{R}{\log{R}})$--competitive. In their work, each robot was allowed to observe the incident edges but not the adjacent vertices. 
Brass~\etal~\cite{brass2011multirobot} used the same sensing model and improved the competitive ratio of Fraigniaud~\etal~\cite{fraigniaud2006collective} to $2|E|/R+\mathcal{O}((R+r)^{R-1})$, where $|E|$ and $r$ denote the number of edges and radius of the graph, respectively.
Dynia~\etal~\cite{dynia2007robots} improved the lower bound proposed by Fraigniaud~\etal~\cite{fraigniaud2006collective} of $2-\frac{1}{R}$ to $\Omega(\frac{\log{R}}{\log{\log{R}}})$. 
As a dual problem, instead of finding competitive trajectories for given robots, Das~\etal~\cite{das2015collaborative} presented an algorithm for minimizing the number of robots given limited energy $E$ for each robot.
Megow~\etal~\cite{megow2012online} showed that the competitive ratio of a single-robot DFS is $2(2+\epsilon)(1+2/\epsilon)$, where $\epsilon$ is a fixed positive parameter, when applied to general graphs. 
Higashikawa~\etal~\cite{higashikawa2014online} presented a $\frac{R+\floor{\log{R}}}{1+\floor{\log{R}}}$--competitive algorithm for exploring a binary tree with $R$ robots.
Preshant~\etal~\cite{preshant2016geometric} showed that the competitive ratio remains largely the same, $\frac{2(\sqrt{2}R+\log{R})}{1+\log{R}}$, where the environment was an orthogonal polygon\footnote{An orthogonal polygon is one in which the edges are aligned with either the $X$ or $Y$ axes.} but was modeled as a tree. We build on this and generalize this to the case where the environment boundary is not necessarily orthogonal. In fact, it can be curved and may contain holes as well. Furthermore, we show how to adapt this algorithm to the case where the environment itself is translating.

To share information among multiple robots, global or local communication can be used. Das~\etal~\cite{das2007map} and Brass~\etal~\cite{brass2011multirobot} introduced bookkeeping devices to write local information on the vertex so that other robots can read this information when they visit the same vertex later. 
Lee~\etal~\cite{lee2016structured} proposed distributed online exploration algorithms assuming a fully connected network. In Higashikawa~\etal~\cite{higashikawa2014online}, robots can communicate with each other when they meet at the same vertex. We adopt the same model. 



\section{Problem Description}\label{sec:prob}
We consider the problem of mapping an ROI (Definition~\ref{def:plume_shape}) using a team with $R$ robots. The size and shape of the ROI are not known to the robots a priori. We use $P\in \mathbb{R}^2$ to denote the 2D ROI. Let $int(P)$ be the interior of $P$ and $\partial P$ be the boundary of $P$. 

We assume that each robot has a 
camera with a square footprint on the plane containing the ROI.
Without loss of generality, we assume that the side length of the square sensor footprint is $1$ in this work.

To avoid complications due to the trivial case of a small ROI, we assume that the ROI is at least as large as the sensor footprint of the robots. Specifically, we require the ROI to satisfy the following assumption.
\begin{mydef}~\label{def:plume_shape}\textbf{\emph{(Fat ROIs)}}
For any $p^\prime\in \partial P$, let $p\in int(P)$ be a point on the normal to $\partial P$ at $p^\prime$ such that $p$ is at a distance of $\frac{\sqrt{2}}{2}$ from $p^\prime$. Let $B(p)$ be an open ball of radius $\frac{\sqrt{2}}{2}$, \ie, $B(p)=\{q\mid \norm{p-q} _2 < \frac{\sqrt{2}}{2}\}$ where $q\in \mathbb{R}^2$. We say that the ROI $P$ is \emph{fat} if $B(p)$ lies completely inside $int(P)$ for all $p^\prime\in\partial P$.
\end{mydef}

\begin{figure}[thpb]
\centering
\includegraphics[width=0.60\columnwidth]{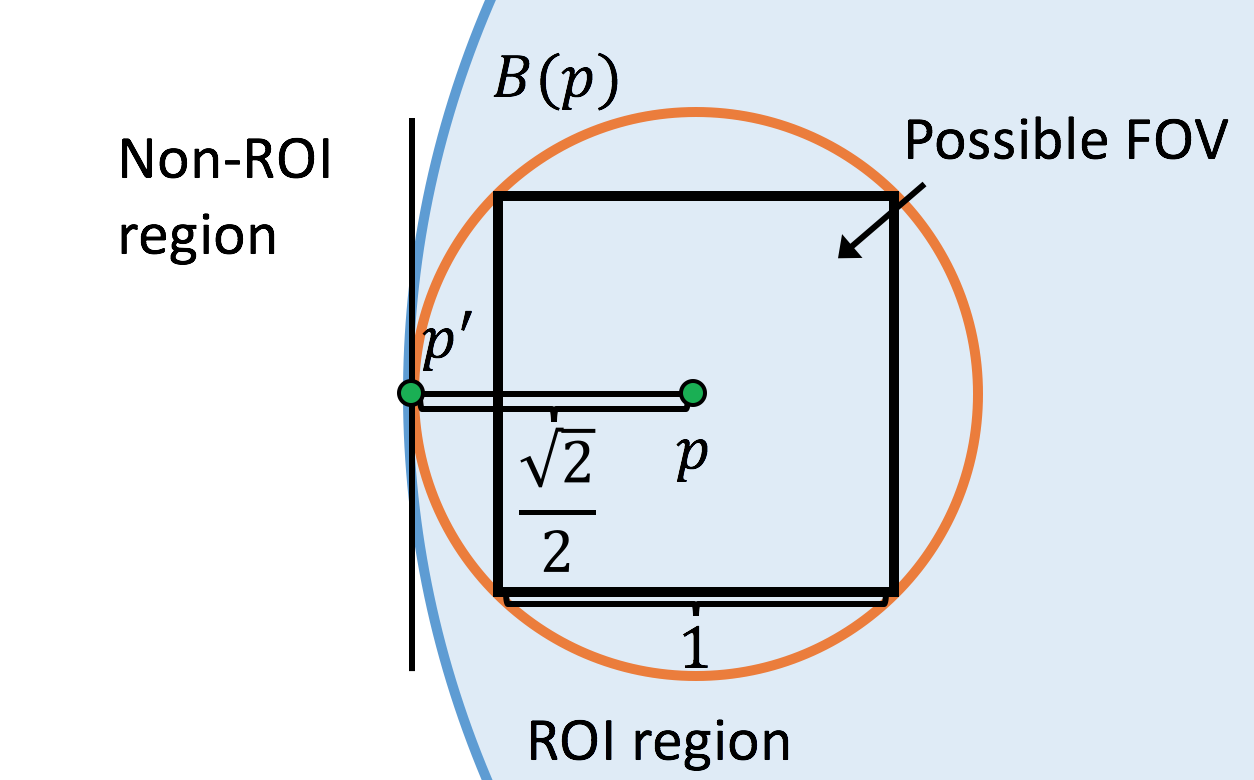}
\caption{We restrict our attention to ROIs that are \emph{fat} (Definition~\ref{def:plume_shape}).}
\label{fig:plume_shape}
\end{figure}

Figure~\ref{fig:plume_shape} shows an example of an ROI that is \emph{fat}. This definition disallows ROIs that have a \emph{width} less than that of the sensor footprint of the robot. Note, however, we still allow the ROI to contain one or more holes. 


While our focus is on mapping ROIs that are stationary, our analysis also extends to the case where the ROI is translating. For that case, we assume that the ROI translates with a fixed speed of $S_p$ in a fixed direction, both of which are known to the robots.\footnote{This is equivalent to the rigid-body translation of $P$.} For example, the velocity of a plume can be determined from the flow of the water which can be found from the environmental conditions such as wind and ocean current models~\cite{petrich2011board}.
We assume that all robots move at a speed of $S_r>S_p$. 

We focus on the mapping problem in this paper. Therefore, we assume that all robots start at the same location where they first observe the ROI. We seek tours for each robot that explore the ROI and return back to this starting location.

\begin{prob}~\label{prob:mrtp}\textbf{\emph{(Multi-Robot Exploration of Translating ROI)}}
Find a tour for all the robots that minimizes the exploration time such that every point in the ROI is visible from the sensor footprint of at least one robot's tour. All tours must return to the same starting position. The exploration time is given by the time when the last robot returns to the starting position.
\end{prob}


The proposed problem is an online exploration problem. The objective function is the exploration time which is the time of the longest tour.
In the next section, we present an algorithm that is based on recursive DFS which is competitive with respect to the optimal solution.

\section{ROI Exploration over a Grid Map}\label{sec:grid}


In this section, we present our main algorithm. We first solve a simpler version of Problem~\ref{prob:mrtp} where the ROI is approximated as a grid map. We then use this result to solve Problem~\ref{prob:mrtp} by relaxing the grid approximation afterwards. Our algorithm is based on the recursive DFS that models the ROI under exploration as a tree. We first show that our strategy is competitive for the grid map case and then analyze the effect of approximating an arbitrary ROI shape with a grid.


\subsection{Recursive DFS Algorithm for a Grid Map}\label{subsec:dfs}

In this section, we assume that the ROI is represented as a grid map~\cite{algfoor2015comprehensive}. The environment is modeled as a collection of cells, each of which is a square of unit side length. Each cell is connected to four of its neighbors. The ROI $P$ is just a collection of $C$ cells that form one connected set (if a cell $c\in P$ is part of the connected ROI $P$, then one of its four neighbors must also be a part of the ROI when $C>1$).

The problem of exploring the ROI is then simplified to that of exploring a grid map and identify the cells that belong in $P$. Since we assume that the sensor footprint is also a unit square, a robot may obtain an image by positioning itself at the center of a cell. By analyzing the pixels on the boundary of the image, the robot can then determine if any of the four neighboring cells are also part of the ROI or not. 

We model $P$ as a tree and propose a recursive DFS algorithm based on the tree exploration algorithm given by Higashikawa~\etal~\cite{higashikawa2014online}. Higashikawa~\etal~\cite{higashikawa2014online} developed a recursive DFS algorithm for exploring a binary tree. In our case, the grid graph to be explored is not necessarily a tree (it may contain cycles). Regardless, we show that modeling the underlying graph as a binary tree still leads to an algorithm with a constant competitive ratio.

The root of the tree is the cell corresponding to the starting position of the robots. Upon visiting a cell, the robots can identify if one or more of the four neighboring cells also contain the ROI. The neighboring cells that contain the ROI are added as children of the present cell in the tree unless those cells have been previously added to the tree. This condition prevents cycles.

The number of neighboring cells when a robot visits a new cell can be at most three. Therefore, the resulting tree may not be binary. However, by introducing a dummy edge of length $0$ and a dummy vertex, we can convert the tree into a binary tree without loss of generality.\footnote{This step is included in Line 15 of Algorithm~\ref{alg:dfs}.}

Each neighboring ROI cell determined by the sensing model becomes one of candidate cells that robots can choose from as the next vertex to visit. The goal becomes to visit all $C-1$ cells (that correspond to the ROI cells but excluding the starting cell) at least once by one of the robots. 

If $R=1$, then our algorithm becomes conventional recursive DFS for a single robot. However, in the multi-robot case, as the robots build the tree, we split the robots as equally as possible and assign them to explore the children vertices. 

We define three states for each vertex in the tree: \emph{unexplored} if the vertex is not visited by any robots; \emph{under exploration} if the vertex is visited by any robots but the leaf vertex connected from the vertex is not visited by any robots; and \emph{explored} if the vertex as well as the leaf vertex in the same branch are visited by any robots. When robots decide which vertex to move among neighboring cells of an ROI region, they do not consider \emph{explored} vertices but vertices that are either \emph{unexplored} or \emph{under exploration}. This is because having \emph{explored} vertex means that the offspring of it must have also been explored by any robots (see Figure~\ref{fig:tree}). 

\begin{figure}[htb]
\centering
\subfigure[Two robots exploring the grid map.]{\includegraphics[width=0.60\columnwidth]{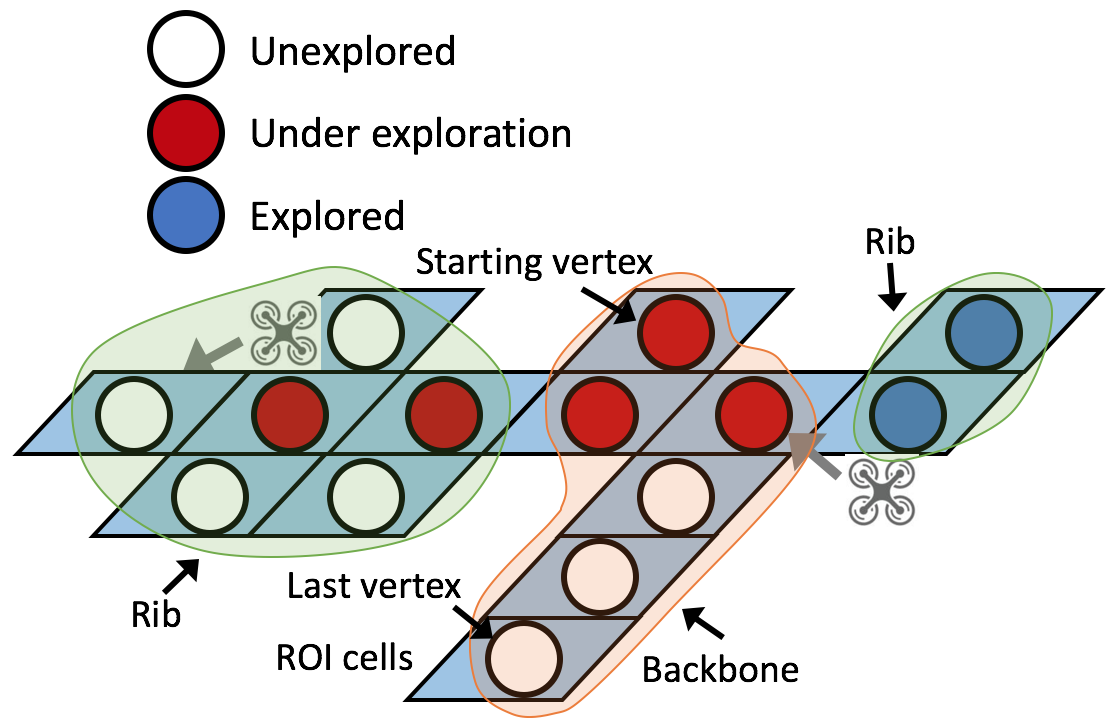}}
\subfigure[Tree generated from the recursive DFS.]{\includegraphics[width=0.35\columnwidth]{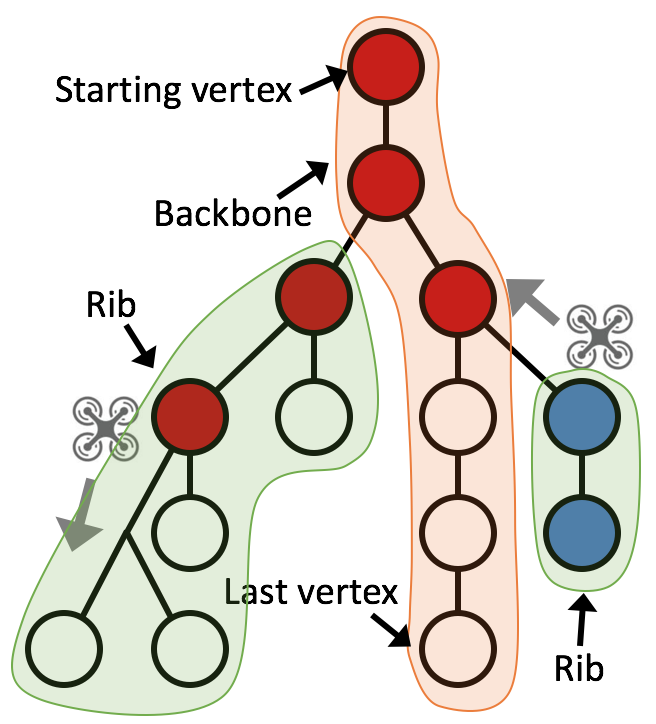}}
\caption{Description of tree components. The binary tree consists of a backbone and a finite number of ribs. Each vertex is marked as one of \emph{unexplored}, \emph{under exploration} or \emph{explored}.
}
\label{fig:tree}
\end{figure}

The details are given in Algorithm~\ref{alg:dfs}.\footnote{In the algorithm, we use $N(v_i)$ to denote the neighborhood of the $i$-th vertex such that $N(v_i)=\{v_j\in V|(v_j,v_i)\in E\}$.} All vertices are marked as \emph{unexplored} state in the beginning. Each robot runs Algorithm~\ref{alg:dfs} whenever it reaches a vertex. The algorithm can be implemented to a single robot independently with respect to other robots as long as they can share the state information of vertices. The robots need to share local information with each other only when they meet at the same vertex. Thus, the algorithm supports distributed communication while still preserving guarantees presented in Section~\ref{subsec:analysis}. The algorithm terminates when all robots return to the starting vertex and all vertices are marked as \emph{explored}.

\begin{algorithm}~\label{alg:dfs}
\SetAlgoLined
\SetKwInOut{Input}{Input}
\SetKwInOut{Output}{Output}

Observe $N(v)$ to determine whether neighboring cells are ROI cells or non-ROI cells.

\If {$|N(v)|$=0} {
Mark $v$ as \emph{explored}.

Move back to the parent vertex ($\rightarrow$next vertex) and directly jump to Line 24.
}

Communicate with robots to update the state of $N(v)$, \ie, \emph{unexplored}, \emph{under exploration}, and \emph{explored}.

$N(v)\leftarrow N(v)\backslash\{$\emph{explored} vertices$\}$.

\If{$v^\prime\in N(v)$ is under exploration} {
\If{moving to $v^\prime$ generates a cycle in the tree} {
$N(v)\leftarrow N(v)\backslash\{v^\prime\}$.
}
}

\uIf{$|N(v)|>1$} {
\If{$|N(v)|>2$} {
Add a dummy edge of length $0$ and a dummy vertex in order to keep the tree as a binary tree.
}

Split robots at $v$ into two children as equally as possible.

Move to one of two children ($\rightarrow$next vertex) and mark $v$ as \emph{under exploration}.
}
\uElseIf {$|N(v)|=1$} {
Move to the child ($\rightarrow$next vertex) and mark $v$ as \emph{under exploration}.
}
\ElseIf {$|N(v)|=0$} {
Move back to the parent vertex ($\rightarrow$next vertex).
}

$v\leftarrow $the next vertex.

\caption{Multi-Robot Recursive DFS}
\end{algorithm}

\subsection{Theoretical Analysis}\label{subsec:analysis}

In this section we analyze the proposed Algorithm~\ref{alg:dfs}. 
We start with the upper bound analysis. We then show the lower bound for optimal algorithm, followed by the competitive analysis for the case of grid approximation.

\paragraph{Upper Bound Analysis}


To analyze the cost of the proposed algorithm, we adapt the reward\footnote{We use the term \emph{reward function} to replace the term \emph{token} defined in Higashikawa~\etal~\cite{higashikawa2014online}.} collecting rule proposed by Higashikawa~\etal~\cite{higashikawa2014online} to the case of a translating ROI.
The reward measures how beneficial it is to visit unexplored vertices and is used to derive theoretical bounds with respect to the optimal reward that can be obtained by exploring a known graph.
Note that this rule is not required for implementing the algorithm, but only for analyzing the competitive ratio.  

Higashikawa~\etal~\cite{higashikawa2014online} define the concept of a backbone and a rib in a tree (shown in Figure~\ref{fig:tree}). The backbone is a path that starts from the root vertex and ends at one of the leaf vertices. The rib is a subtree generated by discarding the backbone and edges incident with the backbone from the original tree.

Let $l(e)$ be the length of an edge $e$. The length of an edge $e$ is $0$ if $e$ is dummy edge or $1$ otherwise. $L=\sum_{e\in E}l(e)$ be the sum of the total length of all edges in the tree. 
Note that $L=C-1$ where the ROI consisting of $C$ cells is represented by the tree structure.

Higashikawa~\etal~\cite{higashikawa2014online} define two reward functions, each with a total reward of $l(e)$, on every rib edge $e$ and $1+\floor{\log{R}}$ reward functions, again each with a total reward of $l(e)$, on every backbone edge $e$. 
The rewards are collected continuously by the robots following the rules described next: 
(1) Only one robot in a group traversing a rib edge in the forward direction for the first time collects a reward. 
(2) Only one robot in a group traversing a rib edge in the backward direction for the first time collects a reward.  
(3) Each of the 
$1+\floor{\log{R}}$ robots traversing a backbone edge in the forward direction for the first time collects a reward.
(4) Only one robot in a group traversing a backbone edge after the first group collects a reward.


Let $t_{last}$ be the time when the last robot reaches a leaf vertex in the tree. Higashikawa~\etal~\cite{higashikawa2014online} show that the total sum of rewards collected by all the robots is at least $(1+\floor{\log{R}})t_{last}$. This assumes that the robots move at unit speed and the tree is static. In our case, the tree (actually, ROI) is moving with a speed of $S_p$ and the robots are moving with a speed of $S_r$. The reward collection rule does not change in this case. What changes is the total sum of rewards collected in $t_{last}$ time. 
In our case, the total sum of rewards collected by all the robots will be at least $(S_r-S_p)(1+\floor{\log{R}})t_{last}$. The term $(S_r-S_p)$ comes from the lower bound on the relative speeds of the robots and the ROI.

Higashikawa~\etal~\cite{higashikawa2014online} also show that the total possible reward that the robots can collect is at most $2(L-d_{max})+(1+\floor{\log{R}})d_{max}$. Here $d_{max}$ denotes the distance of the farthest vertex in the tree from the root. Therefore, we have:
\begin{equation}~\label{eqn:higashikawa}
\begin{split}
&(S_r-S_p)(1+\floor{\log{R}})t_{last} \\
&\le 2(L-d_{max})+(1+\floor{\log{R}})d_{max}.
\end{split}
\end{equation}

We denote the time taken by the proposed algorithm by $\texttt{ALG}$. We are now ready to state the upper bound on $\texttt{ALG}$.
\begin{lemma}[Upper Bound for Multi-Robot Recursive DFS]~\label{lemma:upper_alg}
\begin{equation}~\label{eqn:alg}
\begin{split}
\texttt{ALG}\le\frac{2(C+d_{max}\floor{\log{R}})}{(S_r-S_p)(1+\floor{\log{R}})}.
\end{split}
\end{equation}
\end{lemma}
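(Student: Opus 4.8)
The plan is to obtain the bound directly from the reward inequality~\eqref{eqn:higashikawa}, in two moves: first convert the bound on $t_{last}$ into a bound on the full round-trip time $\texttt{ALG}$, and then substitute $L=C-1$ and simplify. The algebra is light; the content is entirely in the first move.

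First I would relate $\texttt{ALG}$ to $t_{last}$. Since $t_{last}$ is the instant the last robot reaches a leaf, by that time every robot has finished its exploration phase and no vertex remains to be discovered. From $t_{last}$ onward each robot merely backtracks monotonically toward the root along the branch it descended; because the deepest leaf sits at tree-distance $d_{max}$ from the root, every robot's remaining path has length at most $d_{max}$. The robot moves at ground speed $S_r$ while the root of the translating ROI recedes at speed at most $S_p$, so its worst-case closing speed is the same relative speed $S_r-S_p$ that appeared in the reward argument, and the return phase adds at most $d_{max}/(S_r-S_p)$. This gives
\begin{equation*}
\texttt{ALG} \le t_{last} + \frac{d_{max}}{S_r - S_p}.
\end{equation*}

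Next I would substitute. Solving~\eqref{eqn:higashikawa} for $t_{last}$ and rewriting $d_{max}/(S_r-S_p)=(1+\floor{\log R})d_{max}/\big((S_r-S_p)(1+\floor{\log R})\big)$ so both terms share a common denominator, the two numerators combine to $2(L-d_{max})+2(1+\floor{\log R})d_{max}=2L+2d_{max}\floor{\log R}$. Using $L=C-1$ this is $2C-2+2d_{max}\floor{\log R}$, and discarding the harmless $-2$ yields
\begin{equation*}
\texttt{ALG} \le \frac{2C-2+2d_{max}\floor{\log R}}{(S_r-S_p)(1+\floor{\log R})} \le \frac{2(C+d_{max}\floor{\log R})}{(S_r-S_p)(1+\floor{\log R})},
\end{equation*}
which is exactly the claimed bound.

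I expect the only non-routine step to be justifying the return-phase inequality $\texttt{ALG}\le t_{last}+d_{max}/(S_r-S_p)$. It rests on two structural facts about the DFS: that at $t_{last}$ all leaves have been reached, so no further descents occur; and that backtracking from any leaf is a simple path of length at most $d_{max}$ to the root. I would support these by appealing to the state bookkeeping in Algorithm~\ref{alg:dfs} (once a branch is fully explored its vertices are marked \emph{explored} and are never re-entered, so the post-$t_{last}$ motion of each robot is a pure ascent) and to the definition of $d_{max}$ as the maximum root-to-leaf distance. The remaining manipulation is mechanical, and the leftover slack of $2/\big((S_r-S_p)(1+\floor{\log R})\big)$ merely shows the bound is not tight, which is immaterial for the competitive-ratio analysis that follows.
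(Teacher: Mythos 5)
Your proposal is correct and follows essentially the same route as the paper: the identical decomposition $\texttt{ALG}\le t_{last}+\frac{d_{max}}{S_r-S_p}$, substitution of the reward inequality~(\ref{eqn:higashikawa}) with $L=C-1$, and dropping the residual $-2$ to reach the stated bound. Your added justification of the return-phase term (post-$t_{last}$ motion is pure ascent of length at most $d_{max}$ at relative speed $S_r-S_p$) simply makes explicit what the paper states in one line, so there is no substantive difference.
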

\begin{proof}
$\texttt{ALG}$ can be upper bounded as follows:
\begin{equation}~\label{eqn:upper_alg_proof_1}
\begin{split}
\texttt{ALG}&\le t_{last}+\frac{d_{max}}{S_r-S_p}, 
\end{split}
\end{equation}
where $\frac{d_{max}}{S_r-S_p}$ is the time taken to traverse the longest length of the backbone when the robot and the ROI move away from each other. By using Equation (\ref{eqn:higashikawa}), we have: 
\begin{align}
t_{last}+\frac{d_{max}}{S_r-S_p}&\le\frac{2L+(\floor{\log{R}}-1)d_{max}}{(S_r-S_p)(1+\floor{\log{R}})}+\frac{d_{max}}{S_r-S_p}, \label{eqn:upper_alg_proof_2}\\
&=\frac{2(C-1+d_{max}\floor{\log{R}})}{(S_r-S_p)(1+\floor{\log{R}})}, \label{eqn:upper_alg_proof_3}\\
&\le\frac{2(C+d_{max}\floor{\log{R}})}{(S_r-S_p)(1+\floor{\log{R}})} \label{eqn:upper_alg_proof_4}.
\end{align}

Equation (\ref{eqn:upper_alg_proof_3}) is obtained by the fact that $L=C-1$. Removing a negative term from Equation (\ref{eqn:upper_alg_proof_3}) completes the proof as Equation (\ref{eqn:upper_alg_proof_4}).

\end{proof}

\begin{corollary}[Special Cases]~\label{corollary:special}
Upper bounds for the following special cases can be derived from Lemma~\ref{lemma:upper_alg}, such as Multi-Robot Static ROI (MRSR), Single Robot Translating ROI (SRTR), and Single Robot Static ROI (SRSR).

\begin{table}[h]
\centering
\begin{center}
\begin{tabular}{c|c} 
 \hline
 \hline \\[-1em]
 \textbf{MRSR} & $\ \texttt{ALG}\le\frac{2(C+d_{max}\floor{\log{R}})}{S_r(1+\floor{\log{R}})}\ $ \\
 \hline \\[-1em] 
 \textbf{SRTR} & $\ \ \ \texttt{ALG}\le\frac{2C}{S_r-S_p}\ \ \ $ \\
 \hline \\[-1em] 
 \textbf{SRSR}
 & $\ \ \ \texttt{ALG}\le\frac{2C}{S_r}\ $ \\
 \hline
 \hline
\end{tabular}
\end{center}
\caption{Upper bounds of special cases.}
\label{table:1}
\end{table}

Note that the upper bound for MRSR becomes the result from Higashikawa~\etal~\cite{higashikawa2014online} if $S_r=1$. Also, the upper bound for SRSR is equivalent to Icking~\etal~\cite{icking2000exploring} if $S_r=1$.
\end{corollary}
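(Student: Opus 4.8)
The plan is to derive all three bounds by specializing the master inequality of Lemma~\ref{lemma:upper_alg}, namely $\texttt{ALG}\le\frac{2(C+d_{max}\floor{\log{R}})}{(S_r-S_p)(1+\floor{\log{R}})}$, to the appropriate parameter regimes. The guiding observation is that each special case is obtained by fixing one or both of the two parameters that distinguish the general setting from its restrictions: the ROI speed $S_p$ and the number of robots $R$. A \emph{static} ROI corresponds to $S_p=0$, while a \emph{single}-robot team corresponds to $R=1$.

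First I would handle MRSR (Multi-Robot Static ROI). Since the ROI does not move, I set $S_p=0$, so the denominator factor $S_r-S_p$ becomes $S_r$. Substituting directly into Lemma~\ref{lemma:upper_alg} yields $\texttt{ALG}\le\frac{2(C+d_{max}\floor{\log{R}})}{S_r(1+\floor{\log{R}})}$, which is exactly the claimed MRSR bound. The remark that this recovers the result of \cite{higashikawa2014online} then follows by the further normalization $S_r=1$.

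Next I would treat the single-robot cases. Setting $R=1$ gives $\floor{\log{R}}=\floor{\log 1}=0$, so the term $d_{max}\floor{\log{R}}$ in the numerator vanishes and the factor $1+\floor{\log{R}}$ in the denominator reduces to $1$. For SRTR (Single Robot Translating ROI) this leaves $\texttt{ALG}\le\frac{2C}{S_r-S_p}$. For SRSR (Single Robot Static ROI) I would apply both specializations simultaneously, $R=1$ and $S_p=0$; the logarithmic terms collapse as above and the leading speed factor reduces to $S_r$, giving $\texttt{ALG}\le\frac{2C}{S_r}$. The stated equivalence with \cite{icking2000exploring} is again recovered by setting $S_r=1$.

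Because each case is a direct substitution into an already-proven inequality, there is no genuine analytic obstacle to overcome. The only points needing a moment of verification are that $\floor{\log R}=0$ precisely when $R=1$, so that the $\floor{\log R}$-dependent terms collapse cleanly, and that $S_p=0$ is a legitimate instance of the standing assumption $S_r>S_p$ (which holds since $S_r>0$). Everything else is routine algebraic simplification of the master inequality.
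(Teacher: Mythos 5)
Your proposal is correct and takes essentially the same route as the paper's own proof: the paper obtains MRSR by setting $S_p=0$ in Lemma~\ref{lemma:upper_alg}, the single-robot cases by the specialization $R=1$ (so $\floor{\log{R}}=0$ and the $d_{max}$ term vanishes), and SRSR by combining both substitutions. The only difference is that the paper's appendix additionally remarks that the SRTR bound can be tightened to $\frac{2S_rC}{(S_r+S_p)(S_r-S_p)}$ by noting the robot must travel toward the ROI in one direction and away in the other, but the bound actually stated in the corollary, $\frac{2C}{S_r-S_p}$, follows exactly by your direct substitution.
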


\begin{proof}
The upper bound for MRSR can simply be obtained by plugging $S_p=0$ into Equation (\ref{eqn:alg}) of Lemma~\ref{lemma:upper_alg}.

The upper bound for SRTR can be derived from the upper bound of MRSR by having $R=0$. However, we can even tighten the bound by using the following observation: if the robot and the ROI move toward each other in one direction, they must move away from each other in order to return to the starting location, and vice versa. Therefore, $\texttt{ALG}$ can be upper bounded as:
\begin{equation}~\label{eqn:corollary_2_proof_1}
\texttt{ALG}\le\frac{C-1}{S_r+S_p}+\frac{C-1}{S_r-S_p}.
\end{equation}

Taking out negative terms from the above equation becomes:
\begin{equation}~\label{eqn:corollary_2_proof_2}
\begin{split}
\texttt{ALG}\le\frac{2S_rC}{(S_r+S_p)(S_r-S_p)},
\end{split}
\end{equation}
which is a tighter bound than $\frac{2C}{S_r-S_p}$. Note that the difference between these bounds is $\frac{S_r}{S_r+S_p}$ that satisfies $\frac{1}{2}<\frac{S_r}{S_r+S_p}\le 1$ because $S_r>S_p$.

The upper bound for SRSR can be derived by plugging either $R=1$ and $S_p=0$ into the upper bound for MRSR or $S_p=0$ into the upper bound for SRTR.
\end{proof}





\paragraph{Lower Bound Analysis}
We study the lower bound for the optimal algorithm in order to obtain a competitive ratio. Let ${\texttt{OPT}}^1_g$ be the time taken by the optimal algorithm to explore a grid map when using a single robot. The lower bound can be constructed as:
\begin{equation}~\label{eqn:single_optimal}
{\texttt{OPT}}^1_g\ge \frac{C-1}{S_r+S_p}.
\end{equation}

We use ${\texttt{OPT}}^R_g$ to represent the time taken by the optimal algorithm over any grid polygon of an ROI region using $R$ robots. Then, the following lemma gives the lower bound for ${\texttt{OPT}}^R_g$.

\begin{lemma}[Lower Bound for Optimal Algorithm]~\label{lemma:optimal}
\begin{equation}~\label{eqn:multi_optimal}
\begin{split}
{\texttt{OPT}}^R_g\ge \frac{C-1}{(S_r+S_p)R}.
\end{split}
\end{equation}
\end{lemma}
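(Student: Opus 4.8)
The plan is to turn the makespan lower bound into a counting argument on the total amount of ``coverage work'' that must be performed, and then divide this work among the $R$ robots. The total work is fixed: there are exactly $C-1$ non-root cells of the ROI, and every one of them must be brought into some robot's footprint at least once. The makespan $\texttt{OPT}^R_g$ equals the time at which the last robot returns, so each individual robot's tour lasts for duration at most $T := \texttt{OPT}^R_g$. If I can bound the number of distinct cells a single robot can cover in time $T$, summing over the $R$ robots and comparing against the required $C-1$ cells will yield the bound. This is the natural multi-robot generalization of the single-robot inequality~(\ref{eqn:single_optimal}), which is exactly the $R=1$ instance of the same argument.

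The key step is the per-robot coverage-rate bound. First I would pass to the reference frame translating with the ROI; in this frame the ROI (and hence the grid of cells) is static, while each robot's velocity is its world-frame velocity minus the ROI velocity. By the triangle inequality its magnitude is at most $S_r + S_p$, so in the moving frame a robot traverses a relative path of length at most $(S_r+S_p)T$ during its tour. Next I would argue that in the grid model a robot observes a cell only when its unit footprint is aligned with that cell (i.e.\ at the cell center), and that moving the footprint from one cell to a \emph{distinct} adjacent cell requires at least one unit of displacement relative to the grid. Hence, if $n_i$ denotes the number of distinct non-root cells covered by robot $i$, its relative path must have length at least $n_i$, giving $n_i \le (S_r+S_p)\,T$.

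Finally I would aggregate. Since every non-root cell is covered by at least one robot, the union of the covered cells has size $C-1$, and therefore $\sum_{i=1}^{R} n_i \ge C-1$ (each cell in the union is counted at least once in the sum). Combining this with the per-robot bound yields
\begin{equation*}
C-1 \le \sum_{i=1}^{R} n_i \le R\,(S_r+S_p)\,T,
\end{equation*}
and rearranging gives $T = \texttt{OPT}^R_g \ge \frac{C-1}{(S_r+S_p)R}$, as claimed. The tightness of the bound corresponds to the idealized case where the $C-1$ cells are split perfectly evenly among the robots and each robot moves at the maximal relative speed while never revisiting a cell.

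The main obstacle I anticipate is making the per-robot rate bound fully rigorous rather than intuitive: I must be careful that ``covering a new cell costs at least one unit of relative travel'' is argued from the footprint geometry (unit square footprint over unit cells) and not merely asserted, and that the triangle-inequality bound $S_r+S_p$ on relative speed is the correct worst case for a lower bound (we want the \emph{fastest} possible coverage, hence the largest possible relative speed). Handling the start cell bookkeeping and the fact that robots may revisit cells (so $n_i$ counts distinct cells, and $\sum n_i$ may exceed $C-1$) is routine once the rate bound is in place.
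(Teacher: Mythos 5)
Your proof is correct, but it takes a genuinely different route from the paper's. The paper proves the lemma by reduction to the single-robot case: it takes the bound ${\texttt{OPT}}^1_g\ge \frac{C-1}{S_r+S_p}$ of Equation (\ref{eqn:single_optimal}) as given and argues ${\texttt{OPT}}^1\le R\,{\texttt{OPT}}^R$ (in essence, one robot can serve the $R$ optimal tours sequentially --- made cleanest in the frame co-moving with the ROI, where time-shifting a relative trajectory preserves feasibility), which immediately yields the claim; the paper also states ${\texttt{OPT}}^R\le{\texttt{OPT}}^1$, which is not actually needed for the lemma. You instead prove the bound directly by a work-counting argument: pass to the ROI frame, bound each robot's relative path length by $(S_r+S_p)T$, charge at least one unit of relative travel per distinct newly covered cell, and aggregate via $C-1\le\sum_{i=1}^{R} n_i\le R(S_r+S_p)T$. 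Your route buys self-containment and rigor: it re-derives Equation (\ref{eqn:single_optimal}) as the $R=1$ case rather than assuming it, it replaces the paper's rather informal justification of ${\texttt{OPT}}^1\le R\,{\texttt{OPT}}^R$ (a sketch about ``a tree consisting of $R$ branches'') with an explicit per-robot count, and it does not even require the tours to be closed or to share a start vertex, so it is slightly more general. What the paper's reduction buys is brevity and a reusable relation between ${\texttt{OPT}}^1$ and ${\texttt{OPT}}^R$. The one caveat, which you correctly flag yourself, is that the unit-cost-per-new-cell step is valid only in the grid model where observations are taken with the unit footprint aligned over cells (for arbitrary continuous footprint placements, a sweeping-area argument would lose a constant factor); since ${\texttt{OPT}}^R_g$ is by definition the optimum over the grid map, this matches the paper's setting and your argument goes through.
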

\begin{proof}
We claim the following inequalities.
\begin{equation}~\label{eqn:multi_optimal_proof_1}
\begin{split}
{\texttt{OPT}}^R\le {\texttt{OPT}}^1,
\end{split}
\end{equation}

This can be obtained from the fact that the more number of robots are deployed, the shorter time will be taken to explore the entire tree. 

Consider a tree consisting of $R$ branches. Then, we claim the following inequality: 
\begin{equation}~\label{eqn:multi_optimal_proof_2}
\begin{split}
{\texttt{OPT}}^1\le R{\texttt{OPT}}^R,
\end{split}
\end{equation}

Since $\texttt{OPT}^R$ is the time for a robot to explore the longest branch in the tree, $R{\texttt{OPT}}^R$ must be no less than ${\texttt{OPT}}^1$.

Combining these inequalities and Equation (\ref{eqn:single_optimal}), we prove Lemma~\ref{lemma:optimal}.
\end{proof}

It should be noted that the lower bound in Lemma~\ref{lemma:optimal} is loose, as shown in Figure~\ref{fig:simulation}, and thus improving the lower bound is of interest.

\begin{theorem}[Competitive Ratio over the Grid Polygon]~\label{theorem:competitive_grid}
The competitive ratio of Algorithm~\ref{alg:dfs} for a grid map is:
\begin{equation}~\label{eqn:competitive_grid}
\begin{split}
\texttt{ALG}\le&\frac{2(S_r+S_p)(R+\floor{\log{R}})}{(S_r-S_p)(1+\floor{\log{R}})}\texttt{OPT}^R_g\\
&+\frac{2}{(S_r-S_p)(1+\floor{\log{R}})}.
\end{split}
\end{equation}
\end{theorem}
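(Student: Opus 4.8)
The plan is to combine the upper bound on $\texttt{ALG}$ from Lemma~\ref{lemma:upper_alg} with the lower bound on $\texttt{OPT}^R_g$ from Lemma~\ref{lemma:optimal}, eliminating the instance-dependent quantities $C$ and $d_{max}$ in favor of $\texttt{OPT}^R_g$. Starting from
\[
\texttt{ALG}\le\frac{2(C+d_{max}\floor{\log{R}})}{(S_r-S_p)(1+\floor{\log{R}})},
\]
I would split the numerator into the $C$-term and the $d_{max}\floor{\log{R}}$-term and bound each separately against $\texttt{OPT}^R_g$.

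First, the $C$-term is handled directly by Lemma~\ref{lemma:optimal}: rearranging $\texttt{OPT}^R_g\ge\frac{C-1}{(S_r+S_p)R}$ gives $C-1\le(S_r+S_p)R\,\texttt{OPT}^R_g$, hence $C\le(S_r+S_p)R\,\texttt{OPT}^R_g+1$. The trailing $+1$ is exactly what will produce the additive constant $\frac{2}{(S_r-S_p)(1+\floor{\log{R}})}$ in the final bound, so I would carry it along rather than absorbing it.

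Second, and this is the crux, I need a lower bound of the form $\texttt{OPT}^R_g\ge\frac{d_{max}}{S_r+S_p}$, equivalently $d_{max}\le(S_r+S_p)\texttt{OPT}^R_g$, to absorb the $d_{max}\floor{\log{R}}$ term. I would argue this as a makespan bound: the farthest ROI cell sits at depth $d_{max}$ in the constructed tree, at least one robot must place that cell within its footprint, and even exploiting the most favorable ROI translation (so that the robot's relative speed attains its maximum $S_r+S_p$) the time for any robot to travel from the root out to that cell is at least $\frac{d_{max}}{S_r+S_p}$; since the makespan is no smaller than this single travel time, the bound follows. The main obstacle is making this airtight given that the robots may fly over non-ROI cells: I must ensure that the travel distance to the deepest vertex is genuinely governed by $d_{max}$, so that shortcutting cannot drive the reach time below $\frac{d_{max}}{S_r+S_p}$. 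This is where the grid/tree structure and the fact that $d_{max}$ is measured along the backbone must be used carefully, and it is the step where the analysis is most delicate.

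With both ingredients in hand the remainder is arithmetic: multiplying the $d_{max}$ bound by $\floor{\log{R}}$ and adding it to the $C$ bound yields $C+d_{max}\floor{\log{R}}\le(S_r+S_p)(R+\floor{\log{R}})\texttt{OPT}^R_g+1$. Substituting back into Lemma~\ref{lemma:upper_alg} and distributing the common factor $\frac{2}{(S_r-S_p)(1+\floor{\log{R}})}$ produces exactly
\[
\texttt{ALG}\le\frac{2(S_r+S_p)(R+\floor{\log{R}})}{(S_r-S_p)(1+\floor{\log{R}})}\texttt{OPT}^R_g+\frac{2}{(S_r-S_p)(1+\floor{\log{R}})},
\]
which is the claimed competitive ratio.
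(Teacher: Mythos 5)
Your proposal follows essentially the same route as the paper's proof: substitute $C\le(S_r+S_p)R\,\texttt{OPT}^R_g+1$ from Lemma~\ref{lemma:optimal} into the bound of Lemma~\ref{lemma:upper_alg}, absorb the $d_{max}\floor{\log{R}}$ term via $\frac{d_{max}}{S_r+S_p}\le\texttt{OPT}^R_g$, and distribute the common factor, with the same arithmetic and the same additive constant. The one step you flag as delicate---justifying $\frac{d_{max}}{S_r+S_p}\le\texttt{OPT}^R_g$ when robots may shortcut over non-ROI cells, since $d_{max}$ is a tree distance---is precisely the inequality the paper invokes without proof, so your caution there goes beyond, rather than falls short of, the paper's own argument.
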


\begin{proof}
Substituting Equation (\ref{eqn:multi_optimal}) into Equation (\ref{eqn:alg}) gives:
\begin{equation}~\label{eqn:proof_competitive_grid_1}
\begin{split}
\texttt{ALG}\le\frac{2((S_r+S_p)R\texttt{OPT}^R_g+1+d_{max}\floor{\log{R}})}{(S_r-S_p)(1+\floor{\log{R}})}.
\end{split}
\end{equation}

Since $\frac{d_{max}}{S_r+S_p}\le \texttt{OPT}^R_g$, it follows:
\begin{equation}~\label{eqn:proof_competitive_grid_2}
\begin{split}
\texttt{ALG}\le\frac{2(S_r+S_p)(R+\floor{\log{R}})\texttt{OPT}^R_g+2}{(S_r-S_p)(1+\floor{\log{R}})}.
\end{split}
\end{equation}

\end{proof}



\section{ROI Exploration over an Arbitrary ROI Shape}\label{sec:arbitrary}

The presented results so far are for a grid map approximation of the ROI. In this section, we will relate the bounds obtained for the grid map case to the case of arbitrarily shaped ROIs. Specifically, we will extend Lemma~\ref{lemma:upper_alg} to apply to an ROI region that may have an arbitrary shape. 

The algorithm for exploring the ROI remains the same. We will still construct a tree that represents a grid map of the ROI. The main difference here is that in the previous analysis, we assumed that the boundary of the ROI matched the boundary of a grid map exactly. This will no longer hold. Instead, we will explore a grid map that is an \emph{outer} approximation of the ROI (Figure~\ref{fig:in_cell}). 




We define $C^{\textit{ALG}}_{out}$ and $C^{\textit{ALG}}_{in}$ to denote the number of cells in the outer and inner grid approximation by our algorithm, respectively. The outer grid map completely contains the ROI whereas the inner grid map lies completely inside the ROI. Therefore, the term $C$ in the upper bound (Lemma~\ref{lemma:upper_alg}) will now be replaced by $C^{\textit{ALG}}_{out}$. However, the $C$ term in the lower bound (Lemma~\ref{lemma:optimal}) cannot be replaced by $C^{\textit{ALG}}_{in}$. This is because $C^{\textit{ALG}}_{in}$ is defined by the grid imposed by our algorithm. It may be possible to have another grid map (of the same unit side length) that is oriented and/or translated such that it contains fewer than $C^{\textit{ALG}}_{in}$ cells in the interior. We will first find the relationship between $C^{\textit{ALG}}_{out}$ and $C^{\textit{ALG}}_{in}$. Then, we will relate $C^{\textit{ALG}}_{in}$ to $C^{\textit{BEST}}_{in}$ which is the best grid that contains the fewest number of cells completely inside the ROI.

By a slight abuse of notation, we interchangeably use $C^{\textit{ALG}}_{out}$ and $C^{\textit{ALG}}_{in}$ to also denote the corresponding set of cells (along with denoting the number of cells in the set).

\begin{lemma}[Grid Approximation of Arbitrary ROI Shape]~\label{lemma:arbitrary}
The upper bound on $C^{\textit{ALG}}_{out}$ for a \emph{fat} polygon (from Definition~\ref{def:plume_shape}) is given by:
\begin{equation}~\label{eqn:arbitrary}
C^{\textit{ALG}}_{out}\le 3C^{\textit{ALG}}_{in}+6.
\end{equation}
\end{lemma}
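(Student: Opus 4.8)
The plan is to split the outer cells into the inner cells plus the cells that straddle the boundary, and then to bound the latter by charging each boundary cell to a nearby interior cell. Concretely, write $C^{\textit{ALG}}_{out}=C^{\textit{ALG}}_{in}+C_{\partial}$, where $C_{\partial}$ counts the cells of the algorithm's grid that intersect $\partial P$ but are not contained in $int(P)$. The claim $C^{\textit{ALG}}_{out}\le 3C^{\textit{ALG}}_{in}+6$ is then equivalent to $C_{\partial}\le 2C^{\textit{ALG}}_{in}+6$, so the whole argument reduces to showing that the layer of boundary cells is, up to an additive constant, at most twice as large as the set of fully interior cells.

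First I would use the fatness hypothesis to produce interior cells next to the boundary. For a boundary cell $c$, pick $p'\in\partial P\cap c$ and let $B(p)$ be the interior ball of radius $\tfrac{\sqrt2}{2}$ guaranteed by Definition~\ref{def:plume_shape}; its centre $p$ lies at distance $\tfrac{\sqrt2}{2}$ inside the boundary and within distance $\sqrt2$ of $c$. The key geometric fact to establish is that a bounded grid-neighbourhood of $c$ (the cells within a fixed number of steps, the number determined by $\sqrt2$) must contain at least one cell lying entirely in $int(P)$. This is where the value $\tfrac{\sqrt2}{2}$ matters: it forces $P$ to have local inward width at least $\sqrt2$ along the normal, so that some unit cell fits inside $int(P)$ close to $c$. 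Assigning $c$ to such a cell defines a charging map $\phi$ from boundary cells to interior cells, and I would then bound its multiplicity: because all boundary cells charged to a common interior cell lie in a fixed-size neighbourhood of it and the cells are unit squares with disjoint interiors, each interior cell receives only a constant number of charges, and a careful accounting yields the factor $2$ rather than a larger constant. The additive $+6$ absorbs the degenerate cases---very small ROIs where $C^{\textit{ALG}}_{in}$ may be $0$, and boundary cells clustered around a single local feature that have no interior cell to be charged to---so that the inequality holds uniformly.

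The main obstacle is the existence step, and it is sharp: the radius $\tfrac{\sqrt2}{2}$ equals both the circumradius of a unit cell and the covering radius of the unit grid, so a single fatness ball need not by itself contain a full interior cell. I would therefore argue existence using the ball condition along a short stretch of the boundary (equivalently, from the lower bound on inward width) rather than from one ball in isolation. Non-convexity and the admissibility of holes are the secondary difficulties: they force the charging map and the multiplicity count to be carried out locally, component by component of $\partial P$, and the interplay of several boundary layers meeting in a narrow neck is exactly what the additive constant is designed to control.

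As a cross-check I would verify the inequality through an area/perimeter route. Here one uses $C^{\textit{ALG}}_{in}\le\mathrm{Area}(P)\le C^{\textit{ALG}}_{out}$ together with a Steiner-type estimate bounding the area of the boundary collar (cells within distance $\sqrt2$ of $\partial P$) against $\mathrm{Per}(P)$, and an isoperimetric bound $\mathrm{Per}(P)=O(\mathrm{Area}(P))$ that follows from the interior ball condition. This second route should reproduce the same linear relation between $C^{\textit{ALG}}_{out}$ and $C^{\textit{ALG}}_{in}$ and confirm that the constants $3$ and $6$ are attainable.
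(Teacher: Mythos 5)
Your route is genuinely different from the paper's, but it has gaps at exactly its two load-bearing steps. The paper does not charge boundary cells to interior cells locally: it defines the \Excess{} set $C^{\textit{ALG}}_{out}\setminus C^{\textit{ALG}}_{in}$ (so your reformulation $C_{\partial}\le 2C^{\textit{ALG}}_{in}+6$ is the same quantity) and then argues \emph{extremally} --- the excess is maximized when $C^{\textit{ALG}}_{in}$ is convex, since a reflex vertex contributes no excess cell not already contributed by a neighbor; a convex grid polygon is a rectangle; a width-one rectangle yields excess $2C^{\textit{ALG}}_{in}+6$ (one cell above and below each inner cell, plus three at each end), a rectangle of width at least two yields at most $C^{\textit{ALG}}_{in}+8$, and width below one is excluded by fatness. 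That global worst-case-shape analysis is precisely where the constants $2$ and $6$ come from, and it is the content your local charging scheme would have to reproduce.

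The first gap is the existence step, and it does not merely need more care --- it fails. You rightly note that $\tfrac{\sqrt2}{2}$ is exactly the circumradius of a unit cell, so no single fatness ball contains a grid cell; but passing to a short stretch of boundary does not rescue it either. Fatness gives inward width $\sqrt2$ along normals, and a strip of width exactly $\sqrt2$ running at $45^\circ$ to the algorithm's grid is fat, yet an axis-aligned unit square has extent exactly $\sqrt2$ across such a strip: writing the strip as $\{(x,y):|x+y-c|\le 1\}$, a cell with corner sum $s=i+j$ requires $s\ge c-1$ and $s+2\le c+1$, forcing $s=c-1$ exactly, which fails for generic offset $c$ and even in the integer case places corners on $\partial P$ rather than in $int(P)$. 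Along such a stretch there is no interior cell to charge to, while the number of uncharged boundary cells grows linearly with the strip's length, so the additive $6$ cannot absorb them. (This is arguably a soft spot of the lemma itself, but your charging argument makes the dependence on this step explicit and fatal, whereas the paper's proof never needs a per-boundary-cell witness.) The second gap is the multiplicity bound: ``careful accounting yields the factor $2$'' is asserted, not argued, and a generic neighborhood count gives eight candidate chargers per interior cell, i.e., only $C_{\partial}\le 8\,C^{\textit{ALG}}_{in}+O(1)$; extracting $2$ is the whole difficulty, and the paper gets it by identifying the width-one rectangle as the extremal configuration rather than by bounding a charging map. Your area/perimeter cross-check inherits the same weakness: Steiner-type collar estimates and an isoperimetric bound $\mathrm{Per}(P)=O(\mathrm{Area}(P))$ yield a linear relation with unspecified, larger constants, so it cannot ``confirm that the constants $3$ and $6$ are attainable.''
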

\begin{proof}

\begin{figure}[thpb]
\centering
\includegraphics[width=0.70\columnwidth]{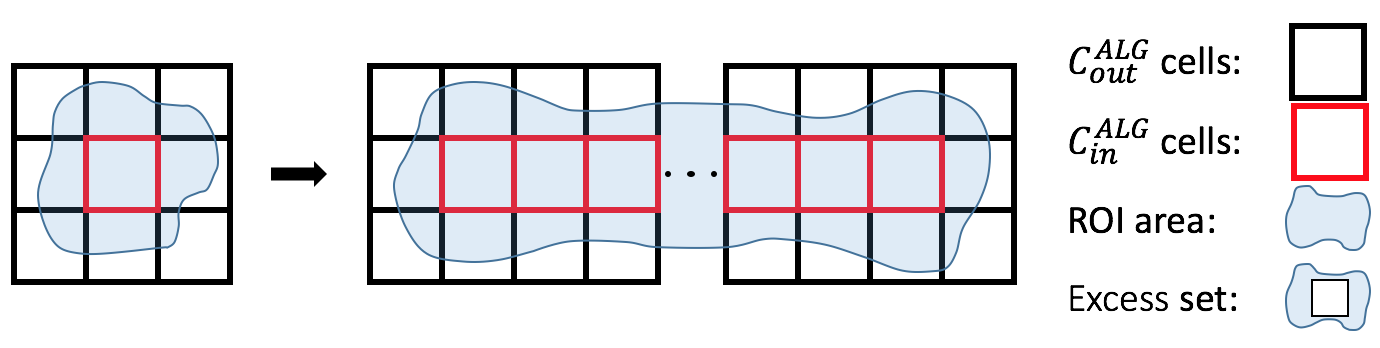}
\caption{Row formation of $C^{\textit{ALG}}_{in}$ cells as the number of cells changes from $1$ to a finite number.}
\label{fig:in_cell}
\end{figure}

To prove the lemma, we define an \Excess~set that contains all cells, $C^{\textit{ALG}}_{out}\backslash C^{\textit{ALG}}_{in}$. That is, \Excess\ set contains all cells in $C^{\textit{ALG}}_{out}$ but not in $C^{\textit{ALG}}_{in}$. Therefore, the size of the \Excess~set is equal to $C^{\textit{ALG}}_{out}-C^{\textit{ALG}}_{in}$. We prove the lemma in three steps.


\Excess~is maximum if and only if all the corners of all cells in $C^{\textit{ALG}}_{in}$ are also corners of all cells in the \Excess~set. 
In other words, if there exists at least one corner of any cell in $C^{\textit{ALG}}_{in}$ not belonging to a cell in the \Excess~set, \Excess~is not maximum. An example of such case is where $C^{\textit{ALG}}_{in}$ is $2\times 2$ set of four cells where the corner in the middle does not belong to any cells in the \Excess~set.

In fact, if the condition for minimum \Excess~is satisfied, then there cannot be any $2\times 2$ set of cells all belonging to $C^{\textit{ALG}}_{in}$. It can be easily shown that we can always convert any shapes of cells in $C^{\textit{ALG}}_{in}$ into a single row formation of cells (shown in the middle in Figure~\ref{fig:in_cell}) without decreasing the number of cells in the \Excess~set. 

If the single row formation is formed by a single cell, then the cell in $C^{\textit{ALG}}_{in}$ contributes two cells that are in the \Excess~set (one above and one below) in addition to three more cells on either end point. This is shown in the left in Figure~\ref{fig:in_cell}. Therefore, by generalizing this to the case of any finite number of cells, the size of \Excess~set becomes $2C^{\textit{ALG}}_{in} + 6$.



\end{proof}

\begin{figure}[thpb]
\centering
\includegraphics[width=0.55\columnwidth]{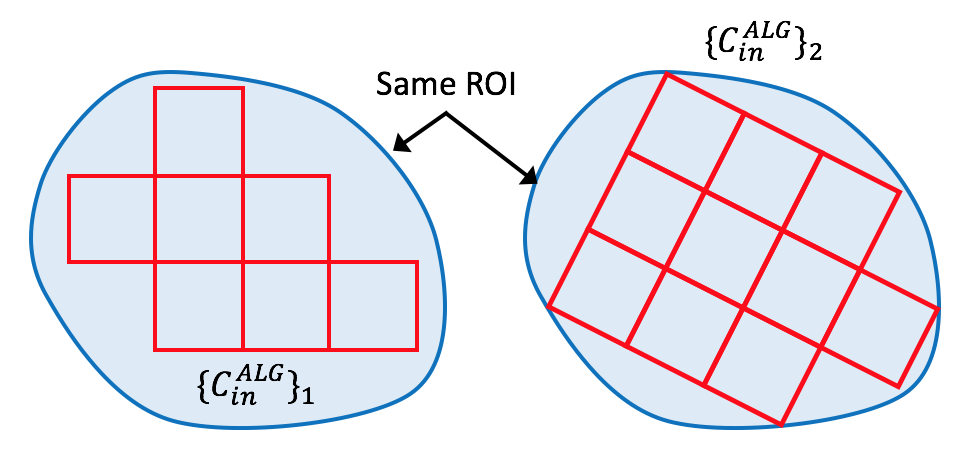}
\caption{
Example demonstrating how the grid approximation over arbitrary ROI affects the required number of cells to be visited to completely explore the ROI. Two grids with respect to different coordinate frames are applied to the same ROI. As $\{C^{\textit{ALG}}_{in}\}_1$ has fewer cells than $\{C^{\textit{ALG}}_{in}\}_2$, $\{C^{\textit{ALG}}_{in}\}_1$ constructs a smaller tree.}

\label{fig:grid_comparison}
\end{figure}

The grid corresponding to $C^{\textit{ALG}}_{out}$ and $C^{\textit{ALG}}_{in}$ is generated by the proposed algorithm. It is possible that there exists some other grid which has fewer than $C^{\textit{ALG}}_{in}$ cells completely contained within the ROI. 
Since the ROI has an arbitrary shape, a fewer number of cells covering the entire ROI leads to a fewer number of vertices in the tree, which would reduce the exploration time (example is included in Figure~\ref{fig:grid_comparison}).
It may not be possible to generate this ``best'' grid due to the nature of online exploration. Nevertheless, we analyze the relationship between $C^{\textit{ALG}}_{in}$ and $C^{\textit{BEST}}_{in}$. We define $C^{\textit{BEST}}_{in}$ to denote the fewest number of cells in the inner grid approximation that is completely contained in the ROI (and adding any other cell to $C^{\textit{BEST}}_{in}$ would not allow $C^{\textit{BEST}}_{in}$ to be completely inside the ROI). The relationship is given by:

\begin{lemma}[Best Possible Grid-Approximation]~\label{lemma:best}
\begin{equation}~\label{eqn:continuous}
C^{\textit{ALG}}_{in}\le 6 C^{\textit{BEST}}_{in}.
\end{equation}
\end{lemma}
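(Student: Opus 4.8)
The plan is to prove the bound by a charging argument: I will map every cell counted by $C^{\textit{ALG}}_{in}$ to an overlapping cell counted by $C^{\textit{BEST}}_{in}$, and then bound the number of algorithm-cells that can be charged to any single best-cell. Since both families consist of interior-disjoint unit squares lying inside $P$, I first record the elementary area bounds $C^{\textit{ALG}}_{in}\le \text{area}(P)$ and $C^{\textit{BEST}}_{in}\le \text{area}(P)$, which I will use as sanity checks and, if the charging map turns out to be only partial, as a fallback to absorb the few unmatched boundary cells into additive slack.

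The constant $6$ should come from a purely combinatorial overlap count. A single best-grid cell is a unit square of arbitrary orientation, hence has diameter $\sqrt2$ and sits in an axis-aligned bounding box of side at most $\sqrt2$. I would show that such a square can meet the interiors of at most six of the interior-disjoint axis-aligned unit cells of the algorithm grid: the length-$\sqrt2$ projections onto each axis cross at most one grid line generically and two in the worst case, and checking the rotated square itself (rather than its bounding box) against the candidate cells caps the count at six. This yields the key in-degree bound, namely that each $C^{\textit{BEST}}_{in}$ cell is charged by at most six $C^{\textit{ALG}}_{in}$ cells.

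The crux is to guarantee that every $C^{\textit{ALG}}_{in}$ cell actually overlaps some \emph{fully interior} best-grid cell. Here the fatness of $P$ (Definition~\ref{def:plume_shape}) is essential. I would use the rolling-ball reading of Definition~\ref{def:plume_shape}: a ball of radius $\tfrac{\sqrt2}{2}$ lies inside $P$ tangent to $\partial P$ at every boundary point, which implies that every point of $P$ lies in some radius-$\tfrac{\sqrt2}{2}$ ball contained in $P$, and hence that $P$ equals the $\tfrac{\sqrt2}{2}$-dilation of its $\tfrac{\sqrt2}{2}$-erosion. From this I would argue that the fully interior cells of \emph{any} unit grid cover a fixed dilation of the erosion of $P$, and that this dilation contains every unit square lying inside $P$; applying this to the algorithm grid's interior cells produces, for each one, an overlapping interior cell of the best grid, completing the charging map.

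The step I expect to be the main obstacle is precisely this last containment near $\partial P$. An algorithm interior cell hugging the boundary can overlap several best-grid cells, all of which protrude past $\partial P$ and are therefore \emph{not} counted by $C^{\textit{BEST}}_{in}$; because the fat radius $\tfrac{\sqrt2}{2}$ exactly equals the circumradius of a unit cell, the existence of a nearby fully interior best-cell is tight and does not follow from fatness alone in the extreme minimum-width case. I therefore anticipate the clean charging to require either an extra hypothesis ruling out near-degenerate widths (consistent with the ``under some conditions'' qualification) or a refinement in which the finitely many boundary cells that fail to match are absorbed using the area bounds above, after which the in-degree-$6$ count delivers $C^{\textit{ALG}}_{in}\le 6\,C^{\textit{BEST}}_{in}$.
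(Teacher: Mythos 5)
Your in-degree-$6$ count is exactly the paper's core argument, in the same direction: the paper fixes a single $C^{\textit{BEST}}_{in}$ cell overlapping a central algorithm cell, labels the twelve incident grid edges, and shows seven crossings are impossible (crossing all four central edges leaves only the two near edges available, and crossing both would force the unit square's side length to exceed $1$), which is the six-overlap bound you propose. One caveat on your sketch of this half: the bounding-box/projection reasoning alone only caps the count at $3\times 3=9$ cells, so the reduction from $9$ to $6$ needs the explicit geometric argument about the rotated square itself; you gesture at this but would have to carry it out, essentially reproducing the paper's edge-crossing case analysis.

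The genuine gap is the one you flagged yourself, and neither of your escape routes closes it. To conclude $C^{\textit{ALG}}_{in}\le 6\,C^{\textit{BEST}}_{in}$ from the in-degree bound, the charging map must be total: every $C^{\textit{ALG}}_{in}$ cell must meet at least one \emph{fully interior} best-grid cell, and as you correctly observe, fatness with radius exactly $\frac{\sqrt{2}}{2}$ (the circumradius of a unit cell) is too tight to force this near $\partial P$. Your first fallback (an extra width hypothesis) proves a different statement than the lemma. Your second fallback (absorbing unmatched cells via area bounds as additive slack) fails quantitatively: the unmatched cells are not ``finitely many'' in any bounded sense --- the number of algorithm cells hugging the boundary grows with the perimeter of $P$, i.e., with the instance size, so it cannot be absorbed into a purely multiplicative bound with no additive term, and the lemma as stated has none. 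For calibration: the paper's own proof is silent on exactly this totality step --- after the six-crossing bound it asserts the inequality directly, implicitly assuming each $C^{\textit{ALG}}_{in}$ cell is met by some $C^{\textit{BEST}}_{in}$ cell. So your proposal reproduces the paper's argument and, to your credit, makes explicit a step the paper glosses over; but as a proof it is incomplete at precisely that point.
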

\begin{proof}
To prove this relationship, it is sufficient to consider any grid approximation (generated by any algorithm) with respect to the best grid approximation. 

\begin{figure}[thpb]
\centering
\includegraphics[width=0.35\columnwidth]{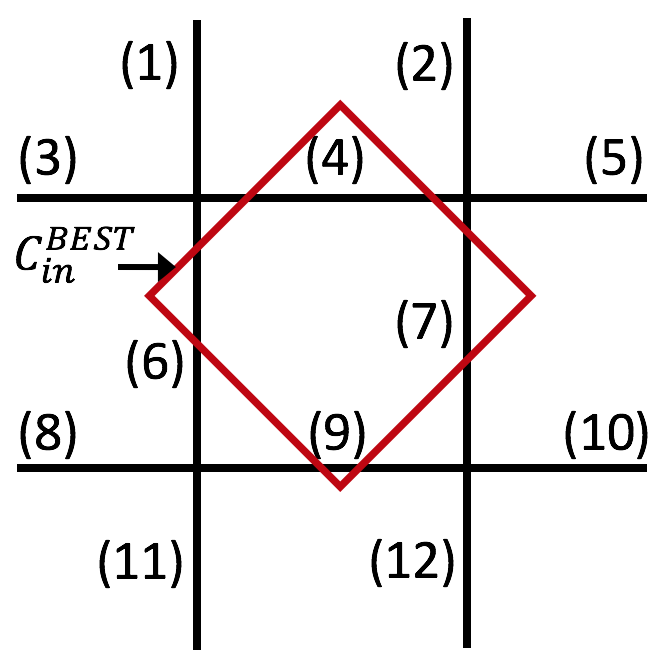}
\caption{A part of grid cell from any grid approximation. Unique number is assigned to a different side of grid cells.}
\label{fig:best_proof_1}
\end{figure}

Figure~\ref{fig:best_proof_1} shows a part of grid cells generated by any grid approximation.
Each number in the figure corresponds to a different side of grid cells. Let $C^{\textit{BEST}}_{in}$ be a single grid cell generated from the best grid approximation that overlaps with the central cell $(4,6,7,9)$ without loss of generality. Our observation is that the number of crossings is equal to the number of cells in $C^{\textit{ALG}}_{in}$ that $C^{\textit{BEST}}_{in}$ overlaps. 





We prove that 7 crossings are impossible.
In order to cross more than four edges, $C_{in}^{BEST}$ has to cross all of the (4, 6, 7, 9) edges. 
In addition, it must cross three of $(1, 2, 3, 5, 8, 10, 11, 12)$ edges. Let us consider the case when edge (1) is crossed. The other cases are symmetric. If edge (1) is crossed, then crossing (5, 8, 10, 11, 12) is impossible since these edges are more than a unit distance apart. Only (2) and (3) edges are only available edges to cross more. However, if $C^{\textit{BEST}}_{in}$ crosses both (2) and (3) edges, the side length of $C^{\textit{BEST}}_{in}$ becomes greater than $1$. Therefore, $C^{\textit{BEST}}_{in}$ cannot cross more than seven edges. Therefore, $C^{\textit{ALG}}_{in} \leq 6 C^{\textit{BEST}}_{in}$. 


\end{proof}

Finally we give our main result as follows:

\begin{theorem}[Competitive Ratio for Arbitrary ROI Shape]~\label{theorem:competitive_arbitrary}
Let ${\texttt{OPT}}^R$ be the time taken by the optimal algorithm over any arbitrary ROI shape using $R$ robots.
\begin{equation}~\label{eqn:main}
\begin{split}
\texttt{ALG}\le&\frac{2(S_r+S_p)(18R+\floor{\log{R}})}{(S_r-S_p)(1+\floor{\log{R}})}{\texttt{OPT}}^R\\
&+\frac{48}{(S_r-S_p)(1+\floor{\log{R}})}.
\end{split}
\end{equation}

\end{theorem}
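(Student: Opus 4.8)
The plan is to mirror the structure of the proof of Theorem~\ref{theorem:competitive_grid}, but now feeding the grid-approximation bounds of Lemma~\ref{lemma:arbitrary} and Lemma~\ref{lemma:best} into both the upper bound on $\texttt{ALG}$ and a lower bound on ${\texttt{OPT}}^R$. Since the algorithm actually explores the \emph{outer} grid approximation of the ROI, I would first invoke Lemma~\ref{lemma:upper_alg} with $C$ replaced by $C^{\textit{ALG}}_{out}$, giving
\begin{equation}
\texttt{ALG}\le\frac{2(C^{\textit{ALG}}_{out}+d_{max}\floor{\log{R}})}{(S_r-S_p)(1+\floor{\log{R}})}.
\end{equation}
Then I would chain the two cell-counting lemmas: substituting $C^{\textit{ALG}}_{in}\le 6C^{\textit{BEST}}_{in}$ (Lemma~\ref{lemma:best}) into $C^{\textit{ALG}}_{out}\le 3C^{\textit{ALG}}_{in}+6$ (Lemma~\ref{lemma:arbitrary}) yields $C^{\textit{ALG}}_{out}\le 18C^{\textit{BEST}}_{in}+6$, which is precisely where the factor $18$ in the final bound originates.

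Next I would establish the lower bound on ${\texttt{OPT}}^R$ in terms of the best grid. This is the analog of Lemma~\ref{lemma:optimal} for arbitrary shapes: because $C^{\textit{BEST}}_{in}$ counts disjoint unit cells lying entirely inside the ROI, and the sensor footprint is itself a unit square that can fully contain at most one such cell, any covering tour (including the optimal one) must place its footprint at at least $C^{\textit{BEST}}_{in}$ distinct positions. This forces at least $C^{\textit{BEST}}_{in}-1$ units of travel, and distributing this over $R$ robots whose closing speed relative to the translating ROI is at most $S_r+S_p$ gives
\begin{equation}
{\texttt{OPT}}^R\ge\frac{C^{\textit{BEST}}_{in}-1}{(S_r+S_p)R},
\end{equation}
equivalently $C^{\textit{BEST}}_{in}\le (S_r+S_p)R\,{\texttt{OPT}}^R+1$.

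Finally I would assemble the pieces. Substituting $C^{\textit{ALG}}_{out}\le 18C^{\textit{BEST}}_{in}+6$ and then the lower bound into the upper bound produces a numerator of the form $18(S_r+S_p)R\,{\texttt{OPT}}^R+24+d_{max}\floor{\log{R}}$, where the constant $24$ collects the $+6$ from Lemma~\ref{lemma:arbitrary} and the $18\cdot 1$ from the $+1$ in the lower-bound rearrangement. As in the grid proof, I would absorb the $d_{max}$ term using $\frac{d_{max}}{S_r+S_p}\le{\texttt{OPT}}^R$, turning $d_{max}\floor{\log{R}}$ into $(S_r+S_p)\floor{\log{R}}\,{\texttt{OPT}}^R$, and then factor $(S_r+S_p)\,{\texttt{OPT}}^R$ out of the $18R$ and $\floor{\log{R}}$ terms to recover the stated ratio, leaving the additive constant $\frac{2\cdot 24}{(S_r-S_p)(1+\floor{\log{R}})}=\frac{48}{(S_r-S_p)(1+\floor{\log{R}})}$.

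The step I expect to be the main obstacle is the lower bound: unlike the grid case, the optimal algorithm for an arbitrary ROI is not confined to cell centers, so I must argue carefully that the continuous optimal coverage cost is still bounded below by the discrete quantity $C^{\textit{BEST}}_{in}$. The key is the definition of $C^{\textit{BEST}}_{in}$ as the minimal number of fully-interior unit cells taken over \emph{all} grid placements, combined with the fact that one unit footprint can fully cover at most one such disjoint cell; this decouples the bound from the particular grid our algorithm happens to impose, which is exactly why the argument must route through $C^{\textit{BEST}}_{in}$ rather than $C^{\textit{ALG}}_{in}$. The remaining manipulations are the same algebraic bookkeeping already carried out in Theorem~\ref{theorem:competitive_grid}.
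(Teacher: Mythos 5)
Your proposal matches the paper's own proof essentially step for step: the same chain $C^{\textit{ALG}}_{out}\le 3C^{\textit{ALG}}_{in}+6\le 18C^{\textit{BEST}}_{in}+6$, the same lower bound ${\texttt{OPT}}^R\ge\frac{C^{\textit{BEST}}_{in}-1}{(S_r+S_p)R}$ (which the paper merely asserts as ``similar to Lemma~\ref{lemma:optimal}'' while you sketch a justification), and the same absorption of $d_{max}\floor{\log{R}}$ via $\frac{d_{max}}{S_r+S_p}\le{\texttt{OPT}}^R$, with all constants ($18$, $24$, $48$) checking out. The only cosmetic difference is that you collapse the two cell-counting substitutions into one step where the paper writes out intermediate coefficients $\alpha,\beta,\gamma$.
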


\begin{proof}
Although ${\texttt{OPT}}^R$ is the cost for any arbitrary ROI shape, we can still lower bound this using $C^{\textit{BEST}}_{in}$ (similar to Lemma~\ref{lemma:optimal}) as:
\begin{equation}~\label{eqn:proof_competitive_arbitrary}
\begin{split}
{\texttt{OPT}}^R\ge \frac{C^{\textit{BEST}}_{in}-1}{(S_r+S_p)R}.
\end{split}
\end{equation}

Let $(S_r-S_p)(1+\floor{\log{R}})$ be $\mathcal{M}$. We can obtain the following inequalities from Lemmas~\ref{lemma:upper_alg},~\ref{lemma:arbitrary}, and~\ref{lemma:best} as follows.
\begin{equation}~\label{eqn:main_proof_1}
\begin{split}
\texttt{ALG}&\le\alpha C^{\textit{ALG}}_{out}+a\le\beta C^{\textit{ALG}}_{in}+b\le\gamma C^{\textit{BEST}}_{in}+b,
\end{split}
\end{equation}
where $\alpha=\frac{2}{\mathcal{M}}$, $a=\frac{2d_{max}\floor{\log{R}}}{\mathcal{M}}$, $\beta=\frac{6}{\mathcal{M}}$, $b=\frac{12}{\mathcal{M}}+\frac{2d_{max}\floor{\log{R}}}{\mathcal{M}}$, and $\gamma=\frac{36}{\mathcal{M}}$.

Substituting Equation (\ref{eqn:proof_competitive_arbitrary}) into the last inequality of Equation (\ref{eqn:main_proof_1}) and using $\frac{d_{max}}{S_r+S_p}\le {\texttt{OPT}}^R$, we have:
\begin{equation}~\label{eqn:main_proof_2}
\begin{split}
\texttt{ALG}&\le\frac{36(S_r+S_p)R}{\mathcal{M}} {\texttt{OPT}}^R+\frac{48+2d_{max}\floor{\log{R}}}{\mathcal{M}},\\
&\le\frac{2(S_r+S_p)(18R+\floor{\log{R}})}{\mathcal{M}}{\texttt{OPT}}^R+\frac{48}{\mathcal{M}}.
\end{split}
\end{equation}
\end{proof}

\section{Simulations}\label{sec:sim}

\begin{figure*}[hbt!]
\centering
\subfigure[Example of the randomly generated ROI over grid cells. The red dot represents the starting vertex for robots.]{\includegraphics[width=0.35\columnwidth]{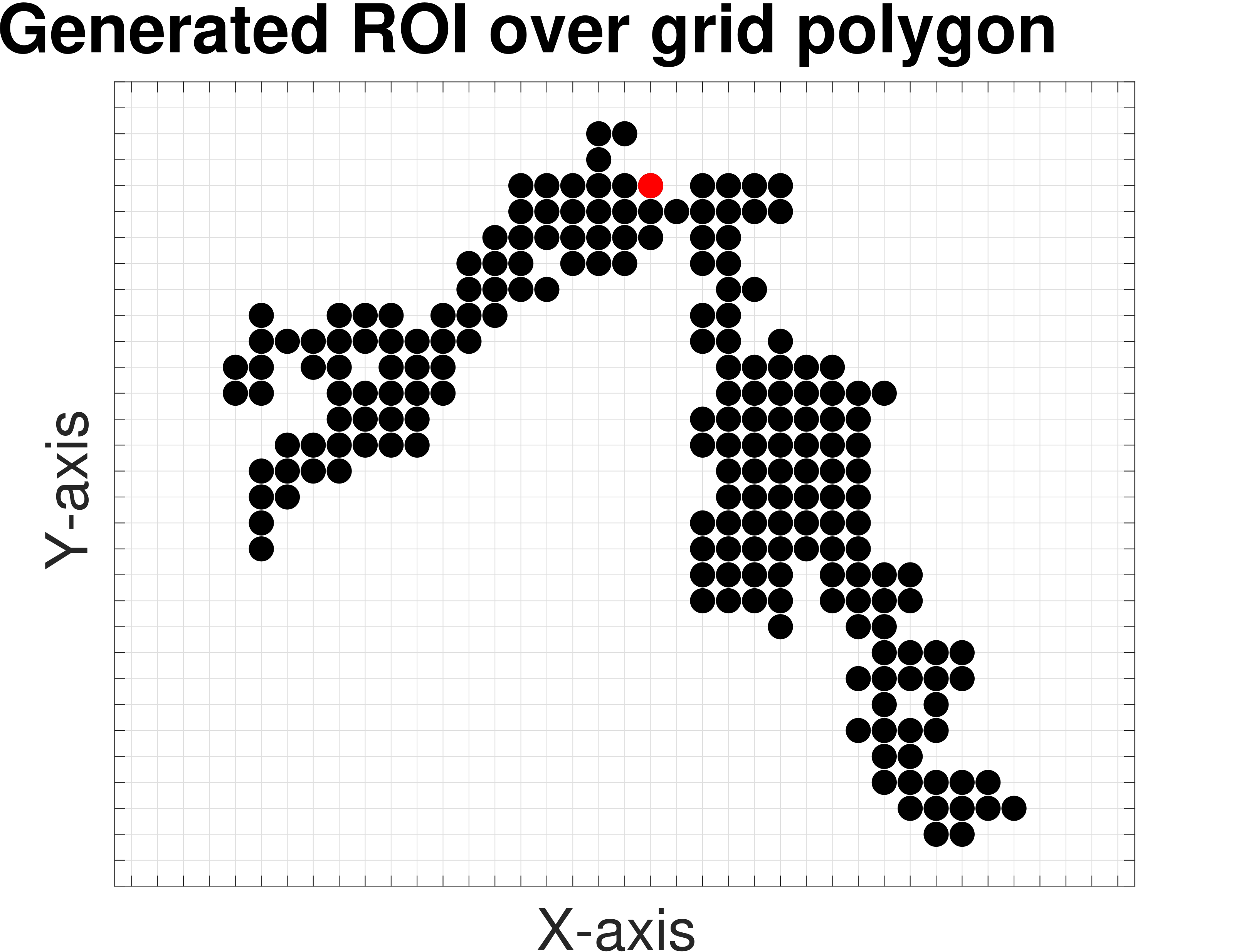}}
\qquad
\subfigure[Plot of the cost when changing the number of ROI cells.]{\includegraphics[width=0.45\columnwidth]{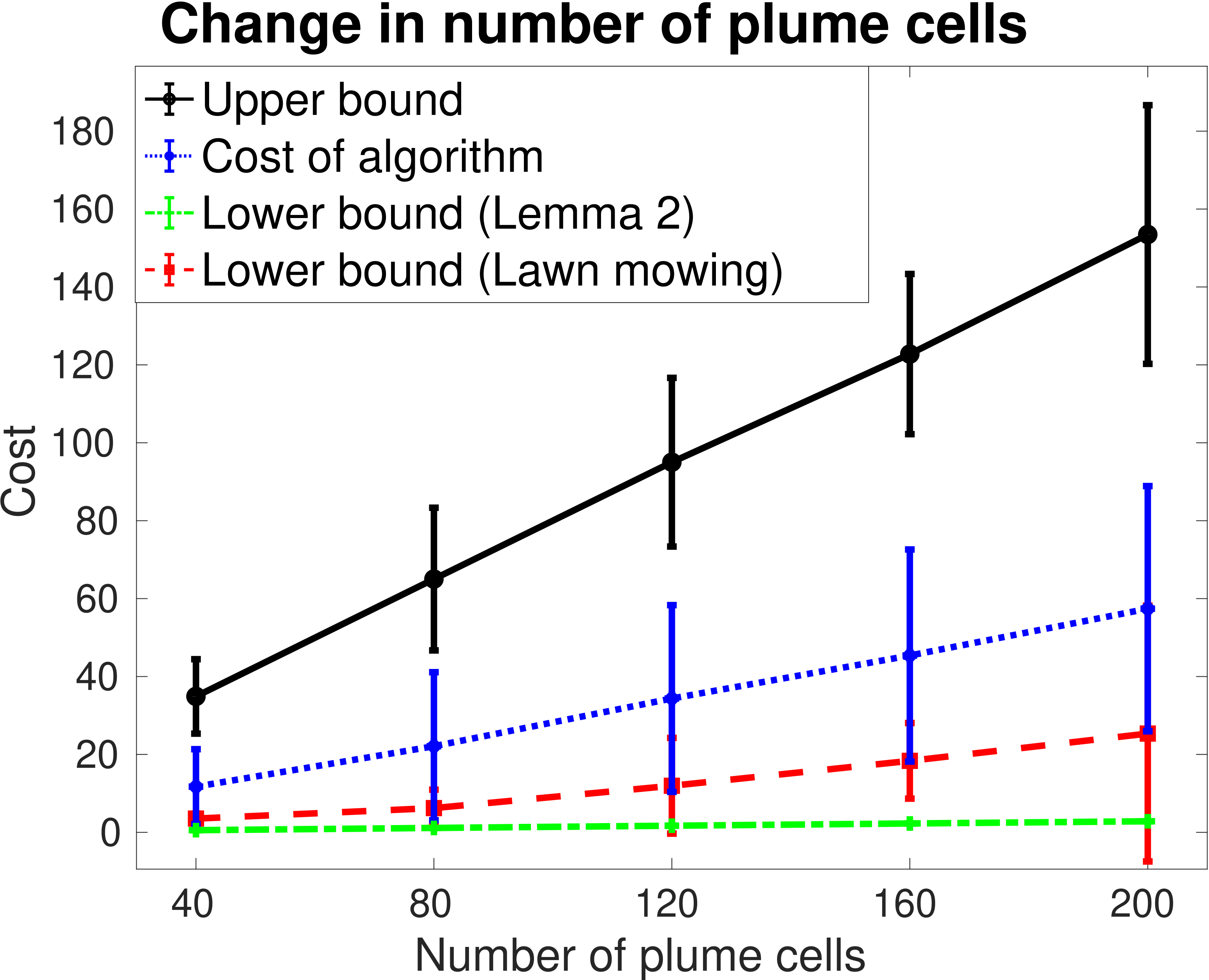}}
\subfigure[Plot of the cost when changing the number of robots.]{\includegraphics[width=0.45\columnwidth]{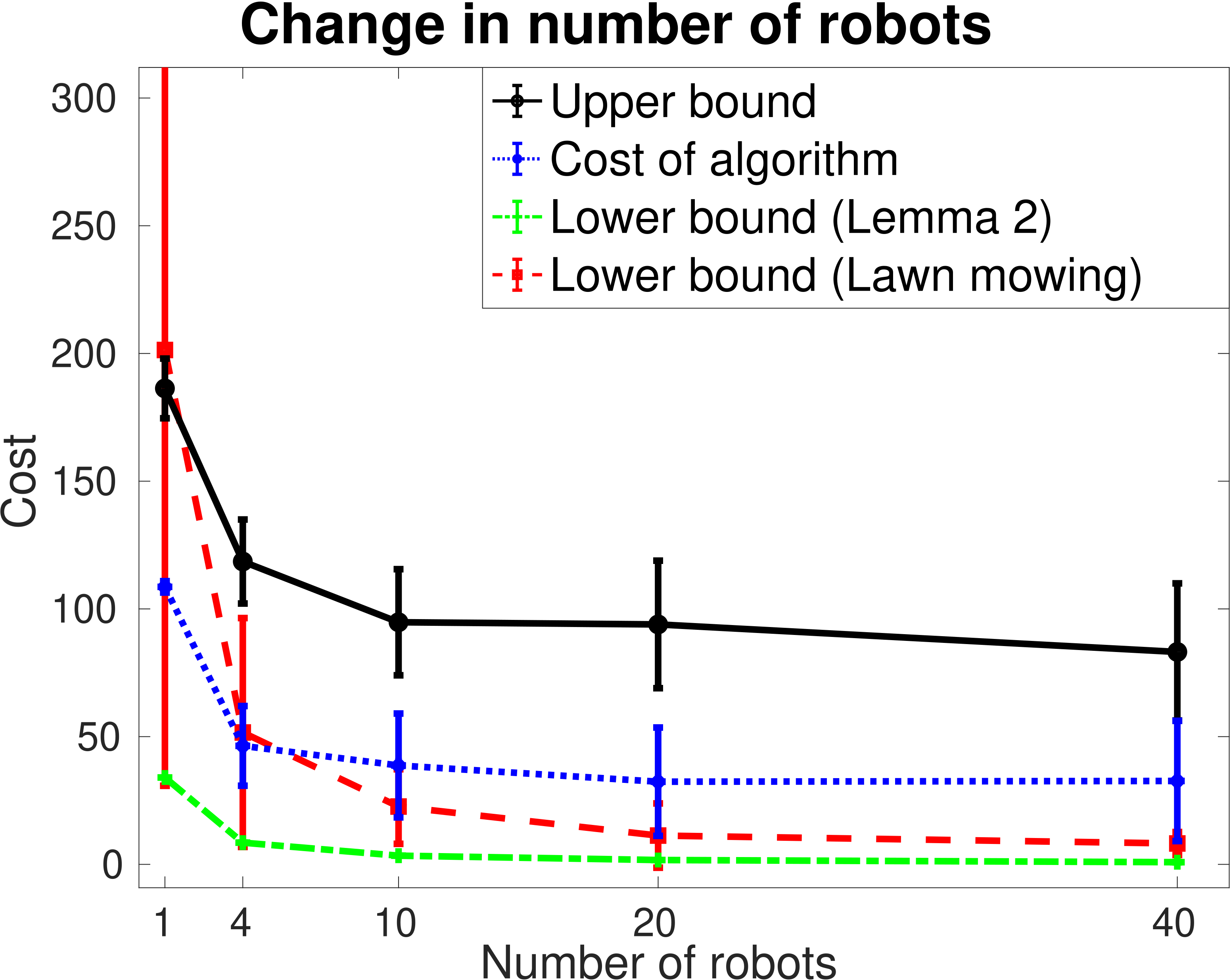}}
\qquad
\subfigure[Plot of the cost when changing the speed ratio between the robot and the ROI. 
]{\includegraphics[width=0.45\columnwidth]{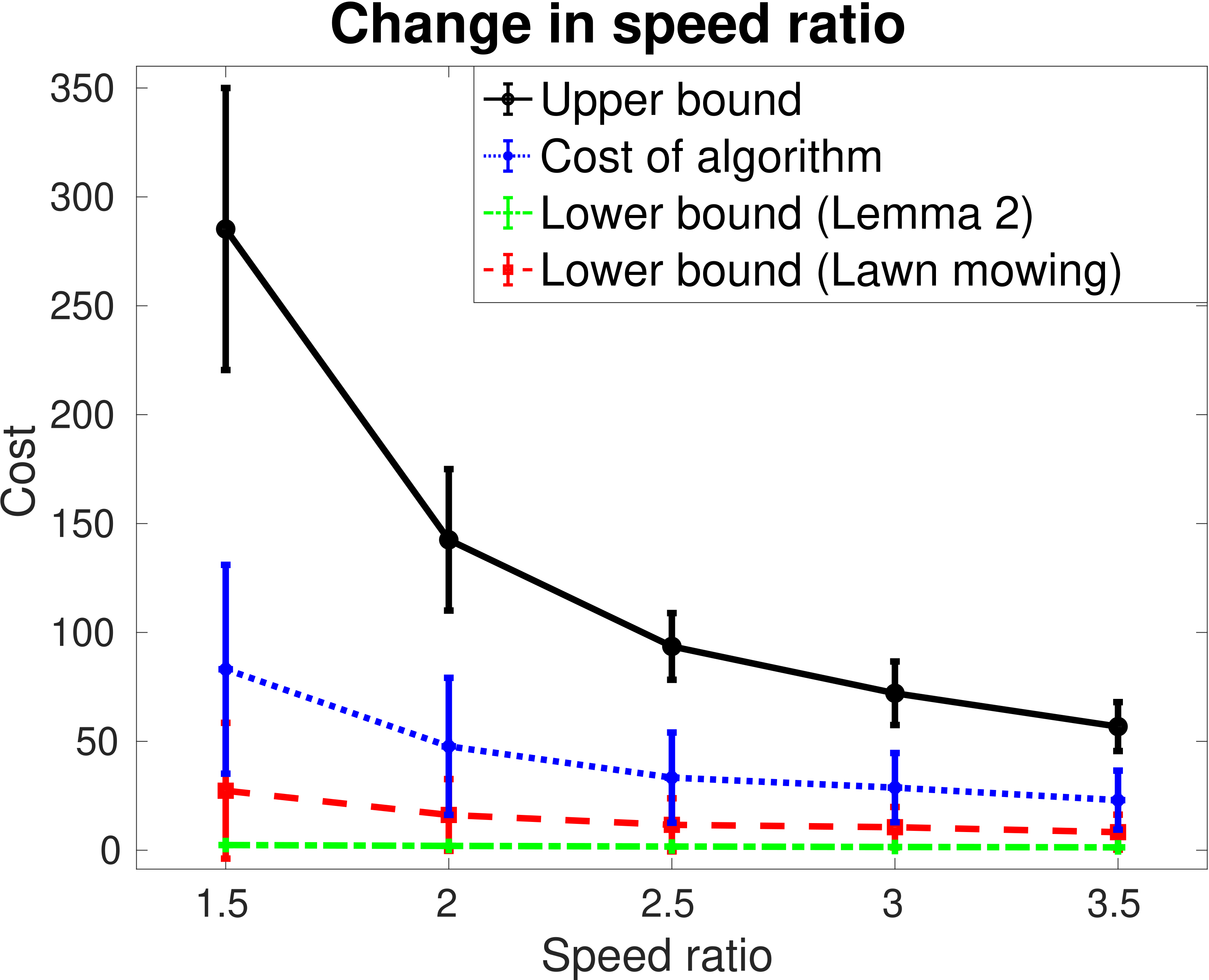}}
\caption{Simulation results. We fixed the number of ROI cells, the number of robots, the speed ratio as $120$, $20$, $2.5$, respectively, when the corresponding variable was not a subject to be changed. We ran $100$ trials for each case. Each case is plotted as mean, maximum and minimum values from $100$ trials.}
\label{fig:simulation}
\end{figure*}

We empirically evaluated our algorithm using MATLAB simulations. Specifically, we verified the performance of the proposed recursive DFS for the grid map approximation of the ROI (Theorem~\ref{theorem:competitive_grid}). 

We randomly generated a set of ROI grid maps. We randomly chose one of four directions (\ie, north, south, east, and west) for the direction of translation of the ROI. The assumption that the moving direction of the ROI is known a priori enables robots to align the axis of the grid map with that direction although robots still do not know about best approximation for grid map. Figure~\ref{fig:simulation} (a) shows an example of the generated ROI that consists of $200$ cells. 

We measured the cost of our algorithm as well as the upper and lower bounds by changing the number of ROI cells, the number of robots, and the speed ratio between the robot and the ROI.
The lower bound in our analysis is given in Lemma~\ref{lemma:optimal}. In deriving the lower bound, we assume that the robots always travel opposite to the direction of translation of the ROI, thereby yielding the lowest possible time. In practice, the robots will not always travel in this best possible direction. Therefore, we find another lower bound using a baseline lawn mower algorithm. We assume that this baseline algorithm knows the tightest rectangle that is guaranteed to contain the ROI. The axis of the rectangle is aligned with the direction of translation of the ROI. Given $R$ robots, we split this rectangle into $R$ smaller ones. We can split this rectangle either along its length or its breadth. The exploration time will be different in each case. We find the time required to cover each smaller rectangle using a lawn mower strategy in both cases and take the lower one. We also ignore the time required for the robots to go from the starting position to the smaller rectangles. This is clearly a lower bound for any online strategy that does not know the size of the ROI. Nevertheless, we find that the exploration time for our algorithm is comparable to this lower bound.

Each case was obtained from $100$ randomly generated trials (see Figures~\ref{fig:simulation} (b--d)). Figure~\ref{fig:simulation} (b) shows that the expected exploration time for all cases is proportional to the number of cells in the grid ROI. The difference between the maximum and minimum costs as well as that between the upper bound and the cost of the algorithm also becomes larger as more ROI cells are to be explored. The cost of our algorithm is closer to the lower bound than the upper bound which becomes even more pronounced as the number of cells increases.

Figure~\ref{fig:simulation} (c) plots the exploration time when changing the number of robots. Not surprisingly, the exploration time of our algorithm and the lower bound decrease as the number of robots increases. The upper bound depends on $d_{max}$. As a result, the upper bound does not decrease as much as the other curves since $d_{max}$ can be high for randomly generated maps. Regardless, we find that our algorithm performs better empirically than what is given by the theoretical upper bound.

Figure~\ref{fig:simulation} (d) shows the exploration time when changing the speed ratio between the robot and the ROI, \ie, $\frac{S_r}{S_p}$. The exploration time for our algorithm and the upper bound decrease as the speed ratio increases. We also observe that the difference between upper and lower bounds decreases as the speed ratio increases.

The simulation results verify the theoretical upper and lower bounds determined by our analysis. In addition, they demonstrate that the practical performance of our algorithm is better than that indicated by the upper bounds. We observe that the practical performance is closer to the lower bound than the upper bound.

\section{Field Experiment}\label{sec:experiment}

In this section, we conduct proof-of-concept experiments using a single UAV equipped with a downward-facing camera to  monitor a stationary region of unknown size and shape. 
The goal of the field experiment is to show how the proposed algorithm can be implemented on an actual robot and can be deployed in the field for online exploration although limited to the single robot case.
In a practical implementation, there are a number of design choices that must be made (\eg, what altitude to fly? how to convert the camera images into cell measurements? how to deal with erroneous sensor measurements?). In this section, we answer these questions in the context of our system. 


Our environment to be mapped is a $92m\times 21m$ long runway which serves  as a proxy of the ROI.
Figure~\ref{fig:setting} shows hardware details of the UAV and the snapshot of the environment. The UAV has ODROID-XU4 single-board computer which runs Ubuntu 16.04 with ROS Kinetic~\cite{quigley2009ros}. The onboard software controls the UAV, communicates with GoPro HERO4 camera over WiFi to read sensor information, and detects the runway.

\begin{figure}[thpb]
\centering
\subfigure[UAV platform.]{\includegraphics[width=0.35\columnwidth]{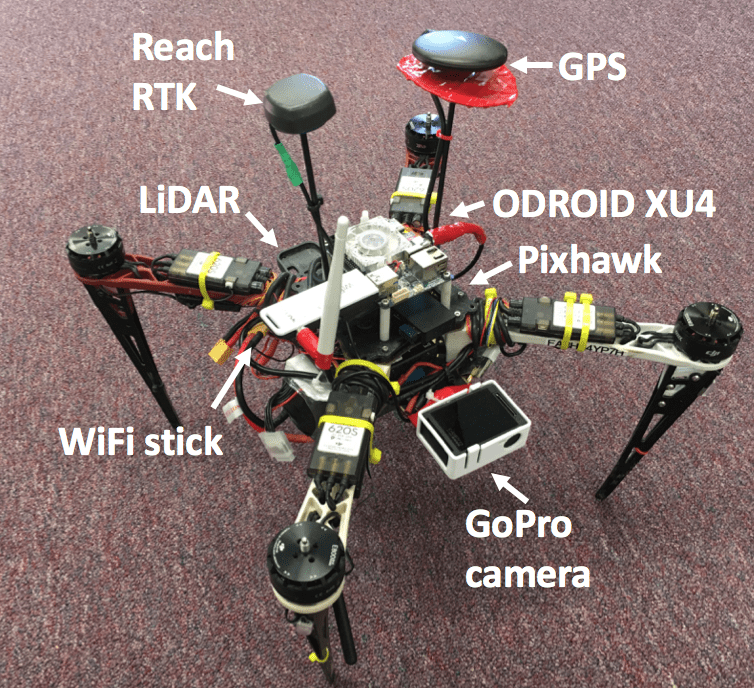}}
\qquad
\subfigure[Runway ($92m\times 21m$) to be explored.]{\includegraphics[width=0.45\columnwidth]{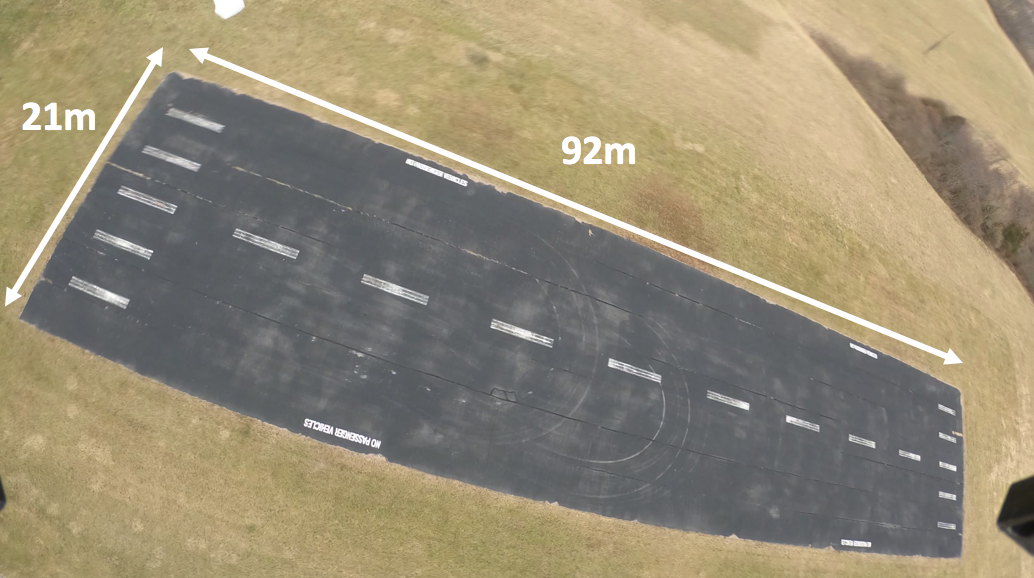}}
\caption{Experimental setting.}
\label{fig:setting}
\end{figure}

Our planning strategy consists of two modes: (1) lawn mowing and (2) the single-robot version of Algorithm~\ref{alg:dfs}. The input to the lawn-mowing mode is a bounding box that is guaranteed to contain at least some part of the runway. In the lawn-mowing mode, the UAV sweeps the bounding box. As soon as the UAV observes a part of the runway, the UAV switches to the recursive DFS algorithm.  

Once in the recursive DFS mode, we discretize the environment into grid cells. The grid is aligned with the UTM coordinates~\cite{wiki:xxx}. The origin of the grid is placed at the starting location of the UAV. Note that the grid is not aligned with the runway (Figure~\ref{fig:setting} (b)) and the location, shape, and size of the runway are not given to the UAV a priori. 

Each cell is of size $4m\times 4m$. We use the onboard GoPro images to determine if a cell corresponds to the runway. Each image is divided into $3\times 3$ regions (Figure~\ref{fig:sensing_model}). The size of a grid map cell ($4m\times 4m$) corresponds to the footprint of the center region in the image when flying at an altitude of $12m$. The center and the top, left, bottom, and right regions in the image (refer to Figure~\ref{fig:sensing_model}) are used to determine if the current cell and its four neighbors in the grid map contain the runway or not.

\begin{figure}[thpb]
\centering
\includegraphics[width=0.60\columnwidth]{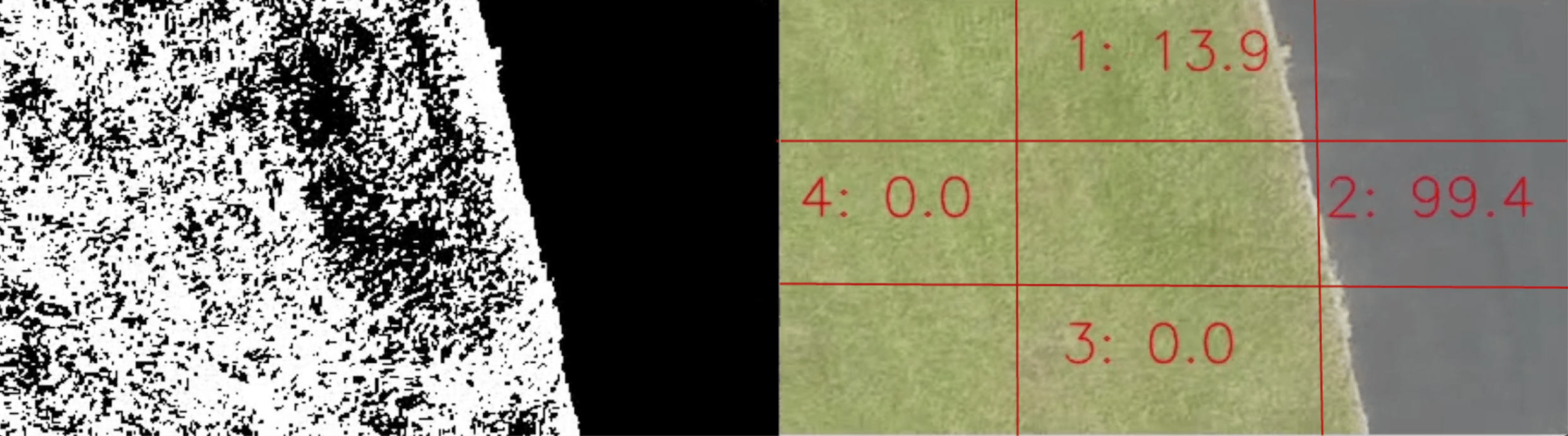}
\caption{Example of our sensing model using a single image that contains both the runway and the grass region. The left image is the thresholded image. The right image shows the detection result indicating the percentage of black pixel values printed on the grid cells (colored in red in four neighboring cells).}
\label{fig:sensing_model}
\end{figure}

Each image is first converted into grayscale and thresholded (at the intensity value of $150$). Then, the thresholded image is dilated ($7$ times) so that the gaps in the grass region are filled in (refer the left image in Figure~\ref{fig:sensing_model}). Consequently, the entire grass region becomes filled with white pixels and the runway region mostly comprises black pixels. Then the percentage of black pixels in the individual regions is calculated,  for each highlighted neighbouring region in the figure. Finally, we produce a binary classification result based on if this percentage is above or below a  threshold. 

Our classifier may not give a correct detection result due to a number of reasons. First, it may produce false positives and false negatives. Second, the detection result is sensitive to the intensity threshold value we set. Third, even if the UAV is on top of the current cell, the camera may point to a wrong cell due to pitch and roll used to counter wind disturbance and imperfect flight controller. Lastly, the change in the sunlight condition might produce noisy measurements. Therefore, instead of relying on a single image, we use five images per cell. If three of these images mark a cell as containing the runway, we treat it as a positive detection. When mapping a region, we care more about the completeness of exploration rather than the efficiency. Therefore, we make the following conservative design choices. If a cell is marked as containing the runway, it will never be reversed later on, even if a future measurement taken from some other location yields the opposite detection. On the other hand, if a cell is marked as not containing the runway and a subsequent measurement suggests otherwise, we will mark it as a positive runway detection. 

The measured flight time of the UAV with a single battery is approximately $12$ minutes, which is not enough for finishing the exploration task. Therefore, we designed our software in a way to keep track of its previous computation even after replacing the battery. To do that, the software stores the information of the generated tree (\ie, the status of vertices and the tree structure) and feeds that to the next flight in order for the UAV to start from the vertex visited last in the previous flight.


\begin{figure*}[hbt!]
\centering
\subfigure[Snapshot at time $2$ minutes. ]{\includegraphics[width=0.37\columnwidth]{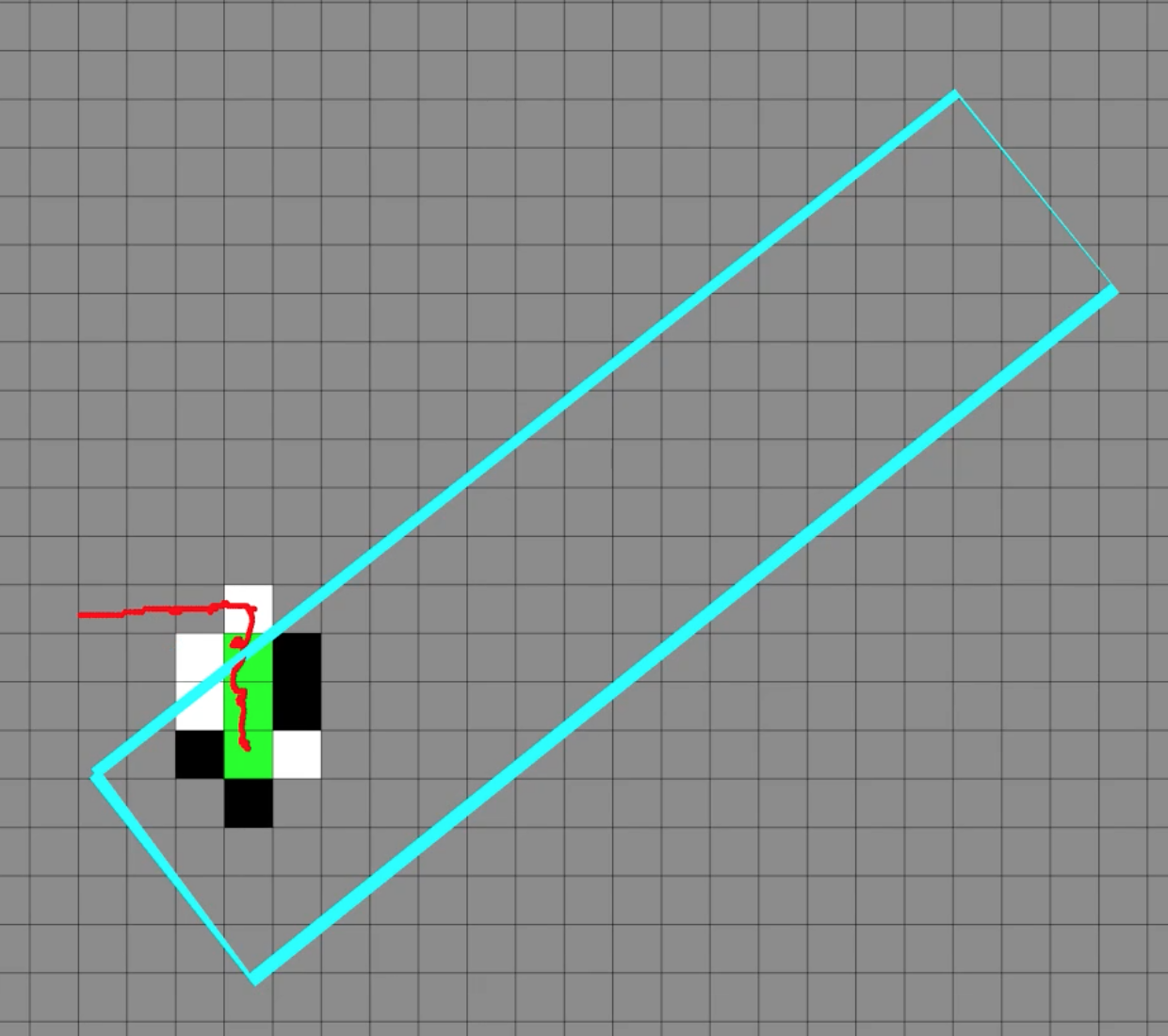}}
\subfigure[Snapshot at time $26$ minutes and $40$ seconds.]{\includegraphics[width=0.37\columnwidth]{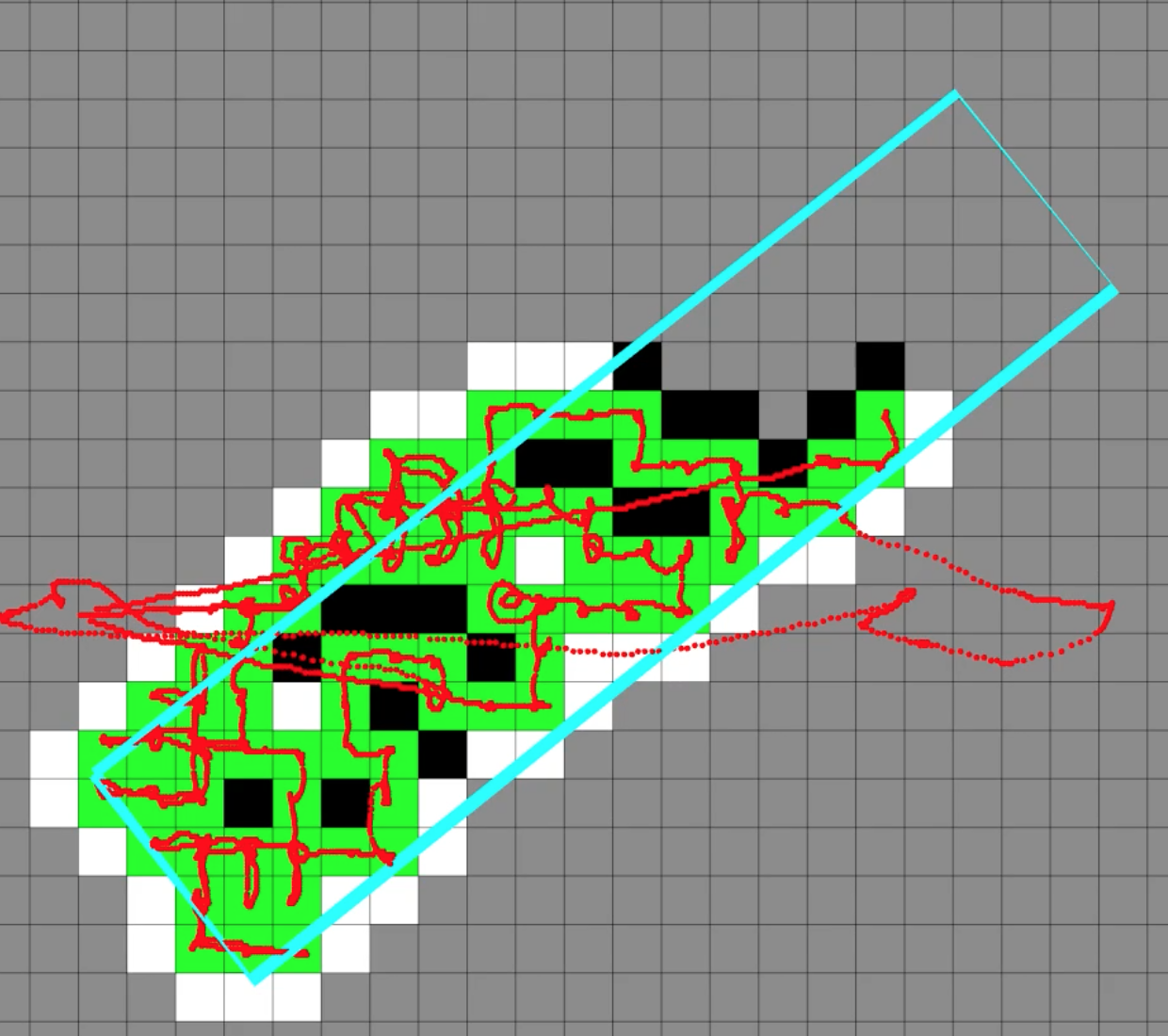}}
\subfigure[Snapshot at time $1$ hour and $13$ minutes.]{\includegraphics[width=0.37\columnwidth]{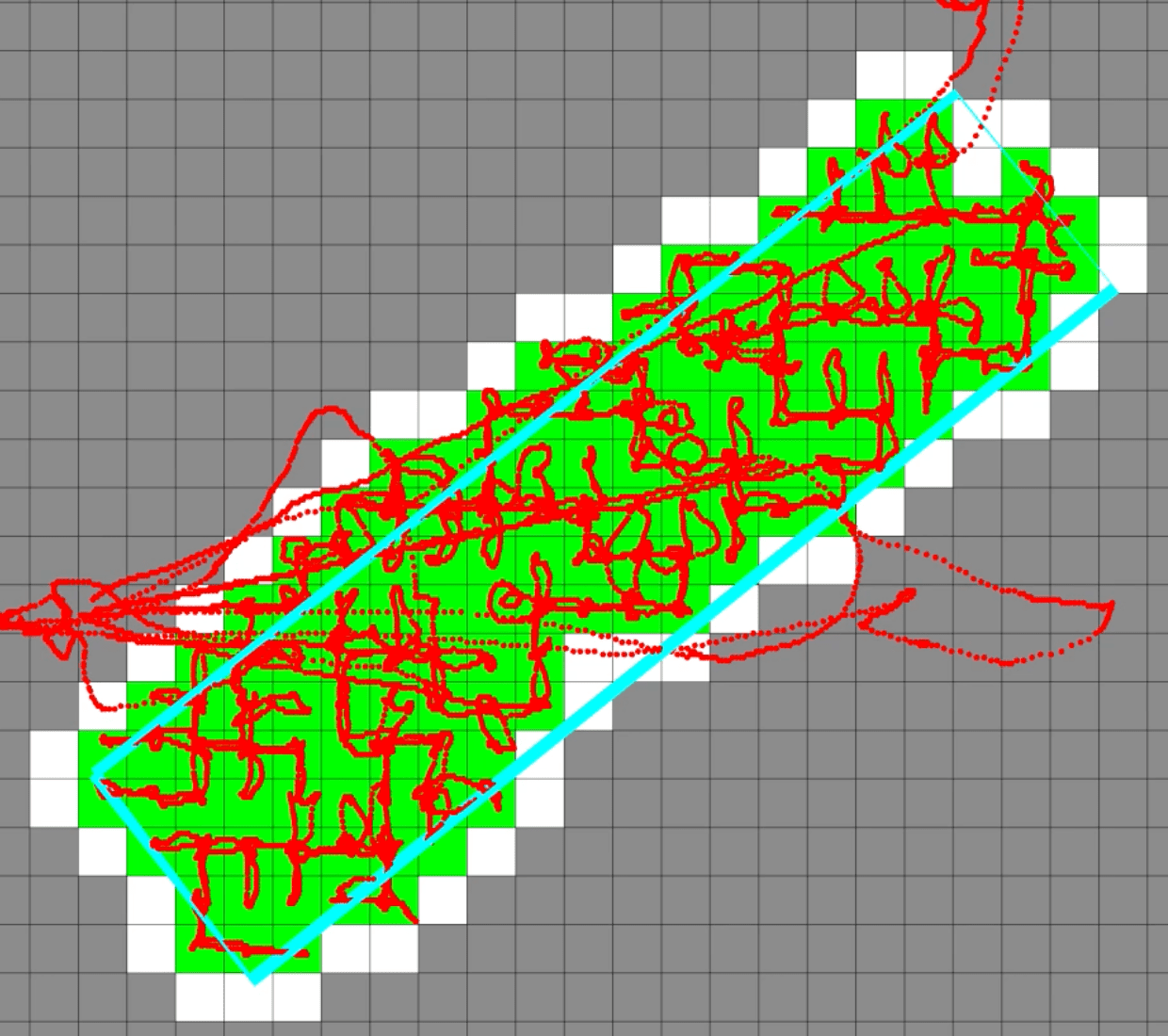}}
\caption{Experimental result. The blue square line represents the ground truth of the runway to be explored that is unknown initially. The red line denotes the trajectory of the UAV. The size of each cell is $4m\times 4m$ and gray, white, black and green colors represent unknown, non-runway, unexplored runway and explored cells, respectively. The total flight time taken to completely explore the entire runway was $1$ hour and $13$ minutes, consisting of six battery replacements. The video is available at: \url{https://youtu.be/ZTpPb0C4Lk0}.}
\label{fig:experiment}
\end{figure*}

Figure~\ref{fig:experiment} shows the result of the runway exploration experiment. We flew the UAV at a nominal speed of $1m/s$ and the ambient wind speed was approximately $3.6m/s$. The total flight time was 1 hour and 13 minutes and consisted of six battery replacements. The final tree contained 143 vertices.

\begin{figure}
\centering
  \includegraphics[width=0.40\columnwidth]{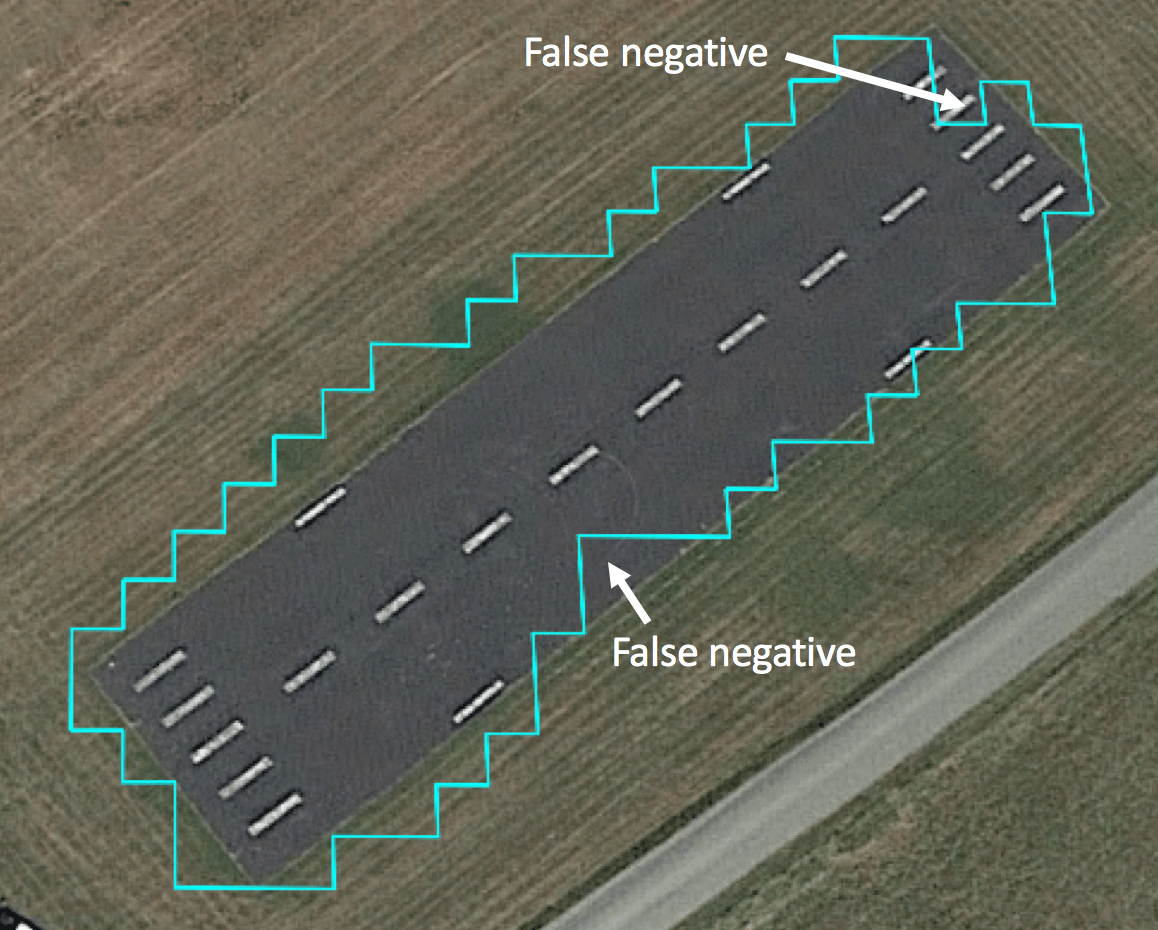}
  \caption{Resultant boundary of the runway region (\ie, the boundary of the grid map) plotted on Google Earth colored in blue.}
  \label{fig:detection_result}
\end{figure}

The final grid map overlaid on Google Earth is shown in Figure~\ref{fig:detection_result}. The percentage of the area of intersection between the ground-truth and the final map, normalized by the area of their union, was $75.7\%$. 

We also compute the false-positive and false-negative detection rates. 
Out of the total 483 detections, 27 were false positives and 53 were false negatives. Note that all but two cells that gave false-negative detections, eventually gave a positive detection. As a result, the final map (Figure~\ref{fig:detection_result}) has only two cells that are incorrectly mapped as not being part of the runway. The cells that are just outside the boundary, however, are incorrectly mapped as being part of the runway. This is likely because of our conservative exploration choice of marking a cell as containing the runway even if a single detection is positive.

In summary, we present how our algorithm handles real-world issues and can be implemented on actual robots. We show the efficacy of the proposed scheme through field experiments.


\section{Conclusion}\label{sec:conc}
We propose a recursive DFS algorithm for a team of aerial robots to explore a translating ROI without knowing its size and shape. We present two approaches for the given problem where the first approximates the ROI to map to the grid whereas the second considers any arbitrary shape of ROI as long as it is \emph{fat}. Both approaches are competitive with respect to the optimal algorithm.
We demonstrate the performance of our algorithm through proof-of-concept deployment to map a stationary ROI. 

One of the practical concerns not modeled by our system is that robots, especially UAVs, have limited battery lifetime. As such, we would like to devise algorithms that can map the ROI subject to the limited battery lifetime constraint. In particular, in our formulation, we restrict the UAV to fly at a fixed altitude. However, one may be able to extend the coverage range by flying at higher altitudes. An interesting and relevant extension of this work would be to plan in 3D space as opposed to 2D.



\bibliographystyle{IEEEtran}
\bibliography{IEEEabrv,yoon_refs}

\begin{thebibliography}{10}
\providecommand{\url}[1]{#1}
\csname url@rmstyle\endcsname
\providecommand{\newblock}{\relax}
\providecommand{\bibinfo}[2]{#2}
\providecommand\BIBentrySTDinterwordspacing{\spaceskip=0pt\relax}
\providecommand\BIBentryALTinterwordstretchfactor{4}
\providecommand\BIBentryALTinterwordspacing{\spaceskip=\fontdimen2\font plus
\BIBentryALTinterwordstretchfactor\fontdimen3\font minus
  \fontdimen4\font\relax}
\providecommand\BIBforeignlanguage[2]{{%
\expandafter\ifx\csname l@#1\endcsname\relax
\typeout{** WARNING: IEEEtran.bst: No hyphenation pattern has been}%
\typeout{** loaded for the language `#1'. Using the pattern for}%
\typeout{** the default language instead.}%
\else
\language=\csname l@#1\endcsname
\fi
#2}}

\bibitem{fernandez2015uav}
J.~Fernandez~Galarreta, N.~Kerle, and M.~Gerke, ``Uav-based urban structural
  damage assessment using object-based image analysis and semantic reasoning,''
  \emph{Natural hazards and earth system sciences}, vol.~15, no.~6, pp.
  1087--1101, 2015.

\bibitem{stachniss2003exploring}
C.~Stachniss and W.~Burgard, ``Exploring unknown environments with mobile
  robots using coverage maps,'' in \emph{IJCAI}, vol. 2003, 2003, pp.
  1127--1134.

\bibitem{masehian2017cooperative}
E.~Masehian, M.~Jannati, and T.~Hekmatfar, ``Cooperative mapping of unknown
  environments by multiple heterogeneous mobile robots with limited sensing,''
  \emph{Robotics and Autonomous Systems}, vol.~87, pp. 188--218, 2017.

\bibitem{song2020care}
J.~Song and S.~Gupta, ``Care: Cooperative autonomy for resilience and
  efficiency of robot teams for complete coverage of unknown environments under
  robot failures,'' \emph{Autonomous Robots}, vol.~44, no.~3, pp. 647--671,
  2020.

\bibitem{brass2011multirobot}
P.~Brass, F.~Cabrera-Mora, A.~Gasparri, and J.~Xiao, ``Multirobot tree and
  graph exploration,'' \emph{IEEE Transactions on Robotics}, vol.~27, no.~4,
  pp. 707--717, 2011.

\bibitem{julia2012comparison}
M.~Juli{\'a}, A.~Gil, and O.~Reinoso, ``A comparison of path planning
  strategies for autonomous exploration and mapping of unknown environments,''
  \emph{Autonomous Robots}, vol.~33, no.~4, pp. 427--444, 2012.

\bibitem{nuske2015autonomous}
S.~Nuske, S.~Choudhury, S.~Jain, A.~Chambers, L.~Yoder, S.~Scherer,
  L.~Chamberlain, H.~Cover, and S.~Singh, ``Autonomous exploration and motion
  planning for an unmanned aerial vehicle navigating rivers,'' \emph{Journal of
  Field Robotics}, vol.~32, no.~8, pp. 1141--1162, 2015.

\bibitem{girdhar2014autonomous}
Y.~Girdhar, P.~Giguere, and G.~Dudek, ``Autonomous adaptive exploration using
  realtime online spatiotemporal topic modeling,'' \emph{The International
  Journal of Robotics Research}, vol.~33, no.~4, pp. 645--657, 2014.

\bibitem{icking2005exploring}
C.~Icking, T.~Kamphans, R.~Klein, and E.~Langetepe, ``Exploring simple grid
  polygons,'' in \emph{International Computing and Combinatorics
  Conference}.\hskip 1em plus 0.5em minus 0.4em\relax Springer, 2005, pp.
  524--533.

\bibitem{kolenderska2009improved}
A.~Kolenderska, A.~Kosowski, M.~Ma{\l}afiejski, and P.~{\.Z}yli{\'n}ski, ``An
  improved strategy for exploring a grid polygon,'' in \emph{International
  Colloquium on Structural Information and Communication Complexity}.\hskip 1em
  plus 0.5em minus 0.4em\relax Springer, 2009, pp. 222--236.

\bibitem{fraigniaud2006collective}
P.~Fraigniaud, L.~Gasieniec, D.~R. Kowalski, and A.~Pelc, ``Collective tree
  exploration,'' \emph{Networks: An International Journal}, vol.~48, no.~3, pp.
  166--177, 2006.

\bibitem{higashikawa2014online}
Y.~Higashikawa, N.~Katoh, S.~Langerman, and S.-i. Tanigawa, ``Online graph
  exploration algorithms for cycles and trees by multiple searchers,''
  \emph{Journal of Combinatorial Optimization}, vol.~28, no.~2, pp. 480--495,
  2014.

\bibitem{tokekar2013tracking}
P.~Tokekar, E.~Branson, J.~Vander~Hook, and V.~Isler, ``Tracking aquatic
  invaders: Autonomous robots for monitoring invasive fish,'' \emph{IEEE
  Robotics \& Automation Magazine}, vol.~20, no.~3, pp. 33--41, 2013.

\bibitem{borodin2005online}
A.~Borodin and R.~El-Yaniv, \emph{Online computation and competitive
  analysis}.\hskip 1em plus 0.5em minus 0.4em\relax cambridge university press,
  2005.

\bibitem{sung2019competitive}
Y.~Sung and P.~Tokekar, ``A competitive algorithm for online multi-robot
  exploration of a translating plume,'' in \emph{2019 International Conference
  on Robotics and Automation (ICRA)}.\hskip 1em plus 0.5em minus 0.4em\relax
  IEEE, 2019, pp. 3391--3397.

\bibitem{tokekar2016sensor}
P.~Tokekar, J.~Vander~Hook, D.~Mulla, and V.~Isler, ``Sensor planning for a
  symbiotic uav and ugv system for precision agriculture,'' \emph{IEEE
  Transactions on Robotics}, vol.~32, no.~6, pp. 1498--1511, 2016.

\bibitem{das2015devices}
J.~Das, G.~Cross, C.~Qu, A.~Makineni, P.~Tokekar, Y.~Mulgaonkar, and V.~Kumar,
  ``Devices, systems, and methods for automated monitoring enabling precision
  agriculture,'' in \emph{Automation Science and Engineering (CASE), 2015 IEEE
  International Conference on}.\hskip 1em plus 0.5em minus 0.4em\relax IEEE,
  2015, pp. 462--469.

\bibitem{tokekar2010robotic}
P.~Tokekar, D.~Bhadauria, A.~Studenski, and V.~Isler, ``A robotic system for
  monitoring carp in minnesota lakes,'' \emph{Journal of Field Robotics},
  vol.~27, no.~6, pp. 779--789, 2010.

\bibitem{plonski2017environment}
P.~A. Plonski, J.~Vander~Hook, C.~Peng, N.~Noori, and V.~Isler, ``Environment
  exploration in sensing automation for habitat monitoring,'' \emph{IEEE
  Transactions on Automation Science and Engineering}, vol.~14, no.~1, pp.
  25--38, 2017.

\bibitem{ishida2012chemical}
H.~Ishida, Y.~Wada, and H.~Matsukura, ``Chemical sensing in robotic
  applications: A review,'' \emph{IEEE Sensors Journal}, vol.~12, no.~11, pp.
  3163--3173, 2012.

\bibitem{lochmatter2013plume}
T.~Lochmatter, E.~A. G{\"o}l, I.~Navarro, and A.~Martinoli, ``A plume tracking
  algorithm based on crosswind formations,'' in \emph{Distributed Autonomous
  Robotic Systems}.\hskip 1em plus 0.5em minus 0.4em\relax Springer, 2013, pp.
  91--102.

\bibitem{fahad2017robotic}
M.~Fahad, Y.~Guo, B.~Bingham, K.~Krasnosky, L.~Fitzpatrick, and F.~A. Sanabria,
  ``Robotic experiments to evaluate ocean plume characteristics and
  structure,'' in \emph{Intelligent Robots and Systems (IROS), 2017 IEEE/RSJ
  International Conference on}.\hskip 1em plus 0.5em minus 0.4em\relax IEEE,
  2017, pp. 6098--6104.

\bibitem{dunbabin2012robots}
M.~Dunbabin and L.~Marques, ``Robots for environmental monitoring: Significant
  advancements and applications,'' \emph{IEEE Robotics \& Automation Magazine},
  vol.~19, no.~1, pp. 24--39, 2012.

\bibitem{galceran2013survey}
E.~Galceran and M.~Carreras, ``A survey on coverage path planning for
  robotics,'' \emph{Robotics and Autonomous systems}, vol.~61, no.~12, pp.
  1258--1276, 2013.

\bibitem{arora2017randomized}
S.~Arora and S.~Scherer, ``Randomized algorithm for informative path planning
  with budget constraints,'' in \emph{2017 IEEE International Conference on
  Robotics and Automation (ICRA)}.\hskip 1em plus 0.5em minus 0.4em\relax IEEE,
  2017, pp. 4997--5004.

\bibitem{popovic2020informative}
M.~Popovi{\'c}, T.~Vidal-Calleja, G.~Hitz, J.~J. Chung, I.~Sa, R.~Siegwart, and
  J.~Nieto, ``An informative path planning framework for uav-based terrain
  monitoring,'' \emph{Autonomous Robots}, pp. 1--23, 2020.

\bibitem{corah2019distributed}
M.~Corah and N.~Michael, ``Distributed matroid-constrained submodular
  maximization for multi-robot exploration: Theory and practice,''
  \emph{Autonomous Robots}, vol.~43, no.~2, pp. 485--501, 2019.

\bibitem{singh2009efficient}
A.~Singh, A.~Krause, C.~Guestrin, and W.~J. Kaiser, ``Efficient informative
  sensing using multiple robots,'' \emph{Journal of Artificial Intelligence
  Research}, vol.~34, pp. 707--755, 2009.

\bibitem{hitz2014fully}
G.~Hitz, A.~Gotovos, M.-{\'E}. Garneau, C.~Pradalier, A.~Krause, R.~Y.
  Siegwart, \emph{et~al.}, ``Fully autonomous focused exploration for robotic
  environmental monitoring,'' in \emph{2014 IEEE International Conference on
  Robotics and Automation (ICRA)}.\hskip 1em plus 0.5em minus 0.4em\relax IEEE,
  2014, pp. 2658--2664.

\bibitem{sim2009autonomous}
R.~Sim and J.~J. Little, ``Autonomous vision-based robotic exploration and
  mapping using hybrid maps and particle filters,'' \emph{Image and Vision
  Computing}, vol.~27, no. 1-2, pp. 167--177, 2009.

\bibitem{cesare2015multi}
K.~Cesare, R.~Skeele, S.-H. Yoo, Y.~Zhang, and G.~Hollinger, ``Multi-uav
  exploration with limited communication and battery,'' in \emph{Robotics and
  Automation (ICRA), 2015 IEEE International Conference on}.\hskip 1em plus
  0.5em minus 0.4em\relax IEEE, 2015, pp. 2230--2235.

\bibitem{bender2002power}
M.~A. Bender, A.~Fern{\'a}ndez, D.~Ron, A.~Sahai, and S.~Vadhan, ``The power of
  a pebble: Exploring and mapping directed graphs,'' \emph{Information and
  computation}, vol. 176, no.~1, pp. 1--21, 2002.

\bibitem{das2007map}
S.~Das, P.~Flocchini, S.~Kutten, A.~Nayak, and N.~Santoro, ``Map construction
  of unknown graphs by multiple agents,'' \emph{Theoretical Computer Science},
  vol. 385, no. 1-3, pp. 34--48, 2007.

\bibitem{icking2000exploring}
C.~Icking, T.~Kamphans, R.~Klein, and E.~Langetepe, ``Exploring an unknown
  cellular environment.'' in \emph{EuroCG}, 2000, pp. 140--143.

\bibitem{arkin2000approximation}
E.~M. Arkin, S.~P. Fekete, and J.~S. Mitchell, ``Approximation algorithms for
  lawn mowing and milling,'' \emph{Computational Geometry}, vol.~17, no. 1-2,
  pp. 25--50, 2000.

\bibitem{arya2001approximation}
S.~Arya, S.-W. Cheng, and D.~M. Mount, ``Approximation algorithm for
  multiple-tool milling,'' \emph{International Journal of Computational
  Geometry \& Applications}, vol.~11, no.~03, pp. 339--372, 2001.

\bibitem{van1994motion}
A.~F. van~der Stappen and M.~H. Overmars, ``Motion planning amidst fat
  obstacles,'' in \emph{Proceedings of the tenth annual symposium on
  Computational geometry}.\hskip 1em plus 0.5em minus 0.4em\relax ACM, 1994,
  pp. 31--40.

\bibitem{efrat2005complexity}
A.~Efrat, ``The complexity of the union of ($\alpha$,$\beta$)-covered
  objects,'' \emph{SIAM Journal on Computing}, vol.~34, no.~4, pp. 775--787,
  2005.

\bibitem{aloupis2014triangulating}
G.~Aloupis, P.~Bose, V.~Dujmovi{\'c}, C.~Gray, S.~Langerman, and B.~Speckmann,
  ``Triangulating and guarding realistic polygons,'' \emph{Computational
  geometry}, vol.~47, no.~2, pp. 296--306, 2014.

\bibitem{lee2016structured}
S.~K. Lee, S.~P. Fekete, and J.~McLurkin, ``Structured triangulation in
  multi-robot systems: Coverage, patrolling, voronoi partitions, and geodesic
  centers,'' \emph{The International Journal of Robotics Research}, vol.~35,
  no.~10, pp. 1234--1260, 2016.

\bibitem{gabriely2001spanning}
Y.~Gabriely and E.~Rimon, ``Spanning-tree based coverage of continuous areas by
  a mobile robot,'' \emph{Annals of mathematics and artificial intelligence},
  vol.~31, no. 1-4, pp. 77--98, 2001.

\bibitem{klein2015local}
R.~Klein, D.~Kriesel, and E.~Langetepe, ``A local strategy for cleaning
  expanding cellular domains by simple robots,'' \emph{Theoretical Computer
  Science}, vol. 605, pp. 80--94, 2015.

\bibitem{sharma2019optimal}
G.~Sharma, A.~Dutta, and J.-H. Kim, ``Optimal online coverage path planning
  with energy constraints,'' in \emph{Proceedings of the 18th International
  Conference on Autonomous Agents and MultiAgent Systems}, 2019, pp.
  1189--1197.

\bibitem{mahadev2017mapping}
A.~Mahadev, D.~Krupke, S.~P. Fekete, and A.~T. Becker, ``Mapping and coverage
  with a particle swarm controlled by uniform inputs,'' in \emph{2017 IEEE/RSJ
  International Conference on Intelligent Robots and Systems (IROS)}.\hskip 1em
  plus 0.5em minus 0.4em\relax IEEE, 2017, pp. 1097--1104.

\bibitem{dynia2007robots}
M.~Dynia, J.~{\L}opusza{\'N}ski, and C.~Schindelhauer, ``Why robots need
  maps,'' in \emph{International Colloquium on Structural Information and
  Communication Complexity}.\hskip 1em plus 0.5em minus 0.4em\relax Springer,
  2007, pp. 41--50.

\bibitem{preshant2016geometric}
\BIBentryALTinterwordspacing
A.~Preshant, K.~Yu, and P.~Tokekar, ``A geometric approach for multi-robot
  exploration in orthogonal polygons,'' in \emph{Workshop on Algorithmic
  Foundations of Robotics (WAFR)}, 2016. [Online]. Available:
  \url{http://www.wafr.org/papers/WAFR_2016_paper_25.pdf}
\BIBentrySTDinterwordspacing

\bibitem{das2015collaborative}
S.~Das, D.~Dereniowski, and C.~Karousatou, ``Collaborative exploration by
  energy-constrained mobile robots,'' in \emph{International Colloquium on
  Structural Information and Communication Complexity}.\hskip 1em plus 0.5em
  minus 0.4em\relax Springer, 2015, pp. 357--369.

\bibitem{megow2012online}
N.~Megow, K.~Mehlhorn, and P.~Schweitzer, ``Online graph exploration: New
  results on old and new algorithms,'' \emph{Theoretical Computer Science},
  vol. 463, pp. 62--72, 2012.

\bibitem{petrich2011board}
J.~Petrich and K.~Subbarao, ``On-board wind speed estimation for uavs,'' in
  \emph{AIAA Guidance, Navigation, and Control Conference}, 2011, p. 6223.

\bibitem{algfoor2015comprehensive}
Z.~A. Algfoor, M.~S. Sunar, and H.~Kolivand, ``A comprehensive study on
  pathfinding techniques for robotics and video games,'' \emph{International
  Journal of Computer Games Technology}, vol. 2015, p.~7, 2015.

\bibitem{quigley2009ros}
M.~Quigley, K.~Conley, B.~Gerkey, J.~Faust, T.~Foote, J.~Leibs, R.~Wheeler, and
  A.~Y. Ng, ``Ros: an open-source robot operating system,'' in \emph{ICRA
  workshop on open source software}, vol.~3, no. 3.2.\hskip 1em plus 0.5em
  minus 0.4em\relax Kobe, Japan, 2009, p.~5.

\bibitem{wiki:xxx}
\BIBentryALTinterwordspacing
{Wikipedia contributors}, ``Universal transverse mercator coordinate system ---
  {Wikipedia}{,} the free encyclopedia,'' 2019, [Online; accessed
  29-April-2019]. [Online]. Available:
  \url{https://en.wikipedia.org/w/index.php?title=Universal_Transverse_Mercator_coordinate_system&oldid=891899172}
\BIBentrySTDinterwordspacing

\end{thebibliography}




\end{document}